\theoremstyle{plain}
\newtheorem{theorem}{Theorem} 
\newtheorem{lemma}[theorem]{Lemma}
\theoremstyle{definition}
\theoremstyle{remark}
\newcommand{\norm}[1]{\left\lVert#1\right\rVert}
\newcommand{\STAB}[1]{\begin{tabular}{@{}c@{}}#1\end{tabular}}
\xpatchcmd{\boxed}{%
\fbox}{%
\fcolorbox{red}{white}}{}{}
\DeclareRobustCommand\onedot{\futurelet\@let@token\@onedot}
\def\@onedot{\ifx\@let@token.\else.\null\fi\xspace}
\let\titleold\title
\renewcommand{\title}[1]{\titleold{#1}\newcommand{\thetitle}{#1}}
\def\maketitlesupplementary
\title{Integrating Reweighted Least Squares with Plug-and-Play Diffusion Priors for Noisy Image Restoration}
\author{
    Ji Li\textsuperscript{\rm 1},
    Chao Wang\textsuperscript{\rm 2}\thanks{Corresponding author.}
}
\begin{document}

\maketitle

\begin{abstract}
Existing plug-and-play image restoration methods typically employ off-the-shelf Gaussian denoisers as proximal operators within classical optimization frameworks based on variable splitting. Recently, denoisers induced by generative priors have been successfully integrated into regularized optimization methods for image restoration under Gaussian noise. However, their application to non-Gaussian noise—such as impulse noise—remains largely unexplored. In this paper, we propose a plug-and-play image restoration framework based on generative diffusion priors for robust removal of general noise types, including impulse noise. Within the maximum a posteriori (MAP) estimation framework, the data fidelity term is adapted to the specific noise model. Departing from the conventional least-squares loss used for Gaussian noise, we introduce a generalized Gaussian scale mixture-based loss, which approximates a wide range of noise distributions and leads to an $\ell_q$-norm ($0<q\leq2$) fidelity term. This optimization problem is addressed using an iteratively reweighted least squares (IRLS) approach, wherein the proximal step involving the generative prior is efficiently performed via a diffusion-based denoiser. Experimental results on benchmark datasets demonstrate that the proposed method effectively removes non-Gaussian impulse noise and achieves superior restoration performance.
\end{abstract}

\setlength{\abovedisplayskip}{3pt}
\setlength{\belowdisplayskip}{3pt}


\section{Introduction}

Image restoration, encompassing tasks such as denoising, deblurring, super-resolution and inpainting, is an important and longstanding problem in computer vision and signal processing. It is typically formulated as an inverse problem, aiming to recover the latent clean image $\bm{x}$ from a corrupted observation $\bm{y}$, modeled by the following system:
\begin{equation}
  \label{eq:prob}
  \bm{y} = \mathcal{D}(\bm{A}\bm{x}),
\end{equation}
where $\bm{x}\in\mathbb{R}^n$ denotes the clean image, $\bm{A}\in\mathbb{R}^{m\times n}$ models the degradation operator, $\mathcal{D}:\mathbb{R}^m\to\mathbb{R}^m$ represents the noise corruption process. The system~\eqref{eq:prob} is generally ill-posed, as the dimension of measurement $m$ is often less than that of the latent image $n$.

To stabilize the recovery of $\bm{x}$, variational optimization approaches introduce regularization that incorporates prior knowledge about the solution. A common formulation is:
\begin{equation}
  \label{eq:2}
  \min_{\bm{x}}\quad \mathcal{S}(\bm{A}\bm{x},\bm{y}) + \lambda \mathcal{R}(\bm{x}),
\end{equation}
where $\mathcal{S}(\bm{A}\bm{x},\bm{y})$ measures the fidelity to the observed data, $\mathcal{R}(\bm{x})$ enforces the prior information about the statistics of the latent image~\citep{chambolle2010introduction,beck2009fast,chan2006total,rudin1992nonlinear,figueiredo2001wavelet}. Notably, this formulation corresponds to a maximum a posteriori (MAP) estimation of the posterior density $p(\bm{x}|\bm{y})$, where $\mathcal{S}(\bm{A}\bm{x},\bm{y})=-\log p(\bm{y}|\bm{x})$ and $\lambda \mathcal{R}(\bm{x})=-\log p(\bm{x})$ respectively.

The fidelity term $\mathcal{S}(\bm{A}\bm{x}, \bm{y})$ should ideally reflect the noise distribution, while the regularization term $\mathcal{R}(\bm{x})$ should approximate the logarithm of the image prior. The plug-and-play (PnP) framework, particularly regularization-by-denoising (RED)~\citep{reehorst2018regularization,romano2017little}, has emerged as a widely adopted and flexible solution for various image restoration tasks. Within this framework, regularization is implicitly enforced via a denoising operator, which can be realized using classical methods such as BM3D\citep{dabov2007image}, deep neural network-based denoisers~\citep{zhang2017beyond}, or denoisers derived from generative models~\citep{chung2022diffusion, zhu2023denoising, martin2024pnp}.

\begin{figure*}[!htbp]
   \centering 
   \begin{tabular}{c@{\hspace*{2pt}}c@{\hspace*{0pt}}c@{\hspace*{0pt}}c@{\hspace*{0pt}}c@{\hspace*{0pt}}c@{\hspace*{0pt}}c@{\hspace*{0pt}}c@{\hspace*{0pt}}c@{\hspace*{0pt}}c@{\hspace*{0pt}}c@{\hspace*{2pt}}c@{\hspace*{2pt}}c@{\hspace*{0pt}}}
      \raisebox{1.0\height}{\includegraphics[width = 0.07\textwidth]{./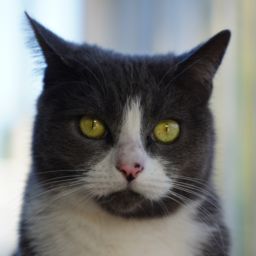}}
     &
       \includegraphics[width = 0.3\textwidth]{./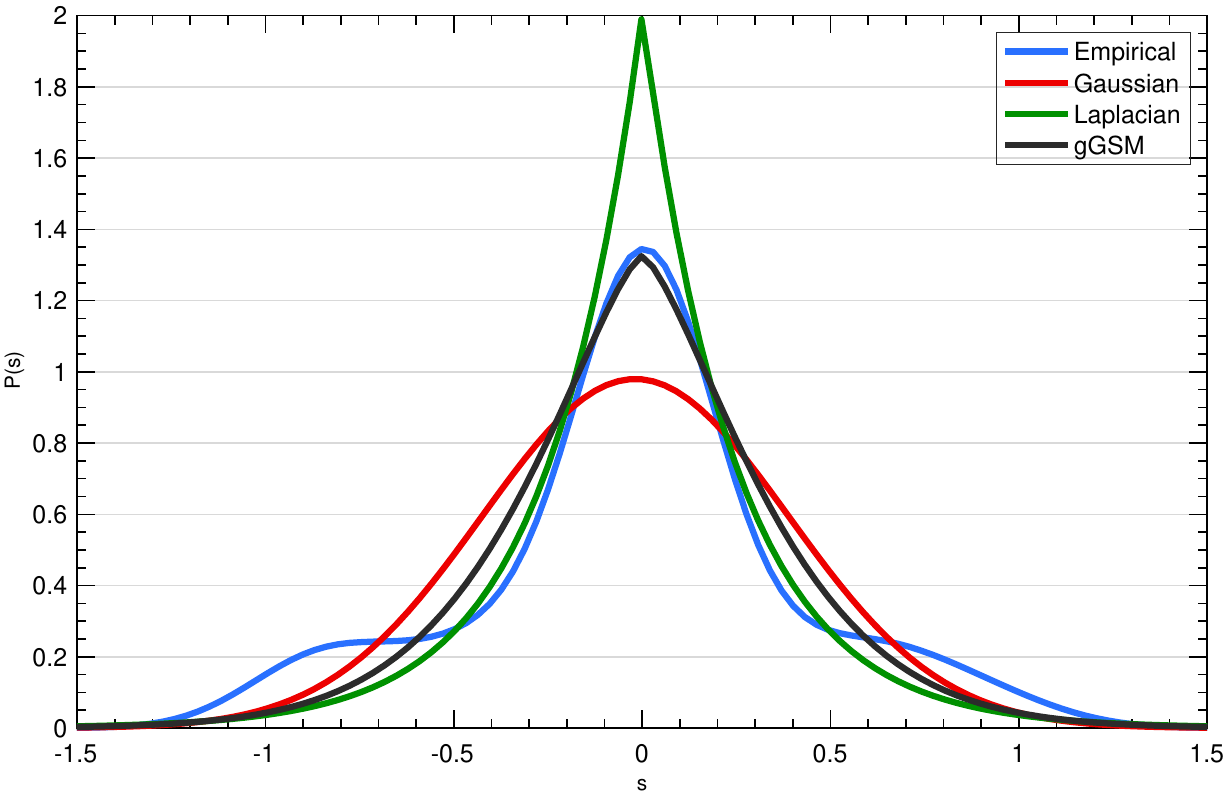}
       \llap{\raisebox{55pt}{
      \includegraphics[width = 0.07\textwidth]{./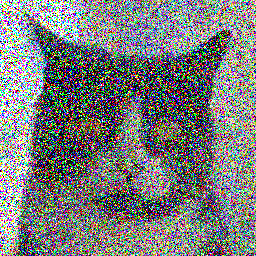}%
       \hspace*{100pt}}}
     & \includegraphics[width = 0.3\textwidth]{./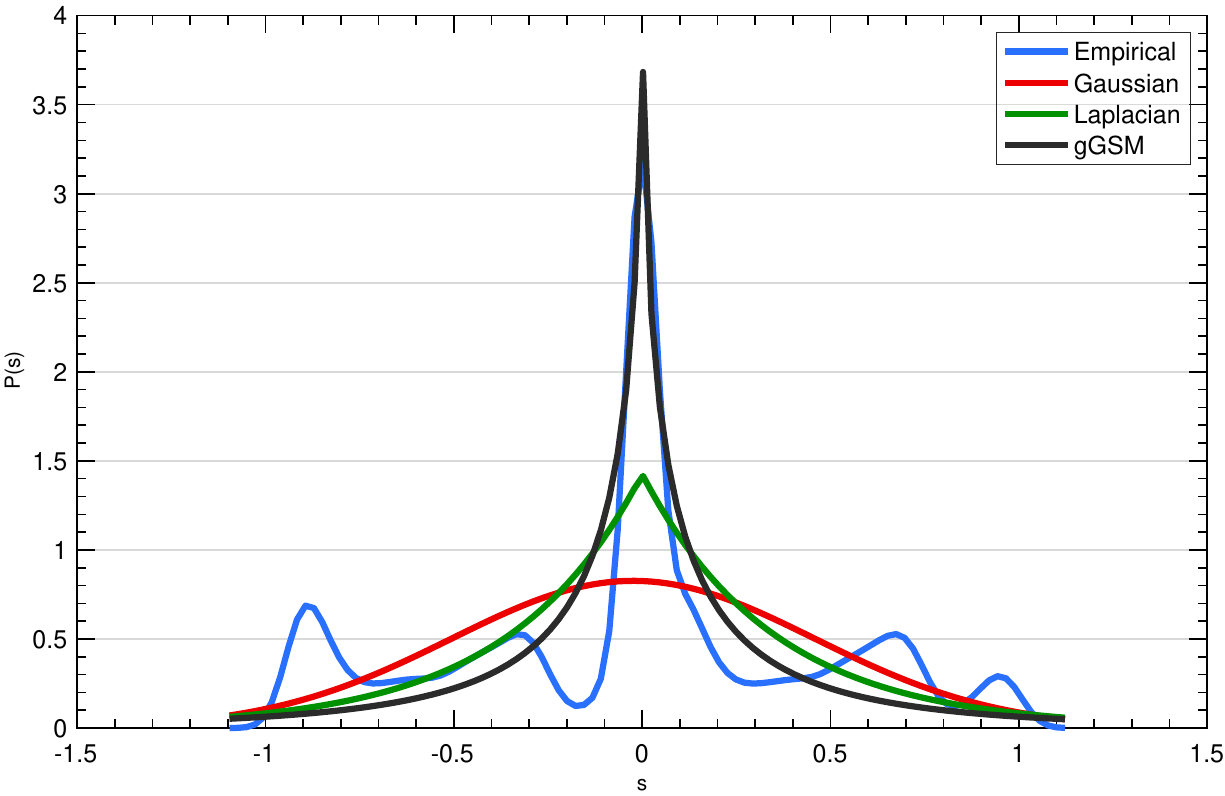}
       \llap{\raisebox{55pt}{
      \includegraphics[width = 0.07\textwidth]{./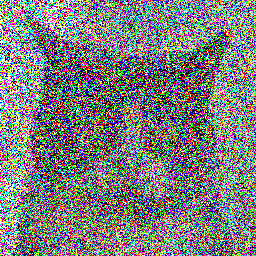}%
       \hspace*{100pt}}}
     & \includegraphics[width = 0.3\textwidth]{./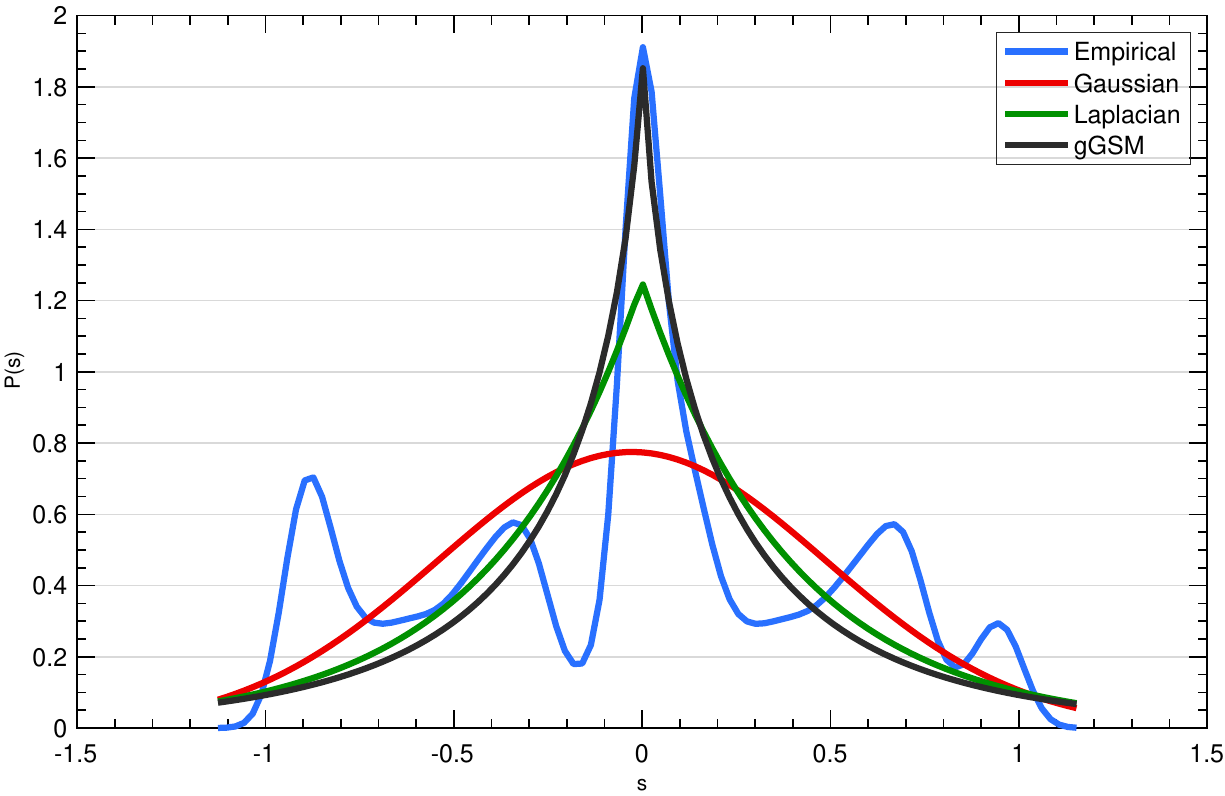}
     \llap{\raisebox{55pt}{
      \includegraphics[width = 0.07\textwidth]{./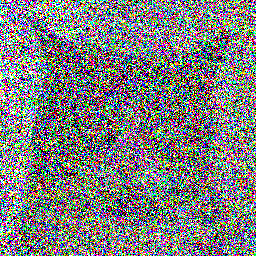}%
       \hspace*{100pt}}}\\[-5pt]
     \tiny{(a)} & \tiny{(b)} & \tiny{(c)} & \tiny{(d)}\vspace*{-8pt}
  \end{tabular}
   \caption{\textbf{Empirical distribution of impulse noise and its fitted approximation densities (including Laplacian, Gaussian and generalized Gaussian scale mixture) using maximum likelihood estimation.} (a) The clean image; (b) Noise level $s_{sp}=0.5$, the fitted parameter $\theta=0.25, \delta=0.44,q = 1.5$. (c) Noise level $s_{sp}=0.7$, the fitted parameter $\theta=0.35, \delta=0.05,q = 0.5$. (d) Noise level $s_{sp}=0.8$, the fitted parameter $\theta=0.40, \delta=0.20,q = 0.7$. For (b)-(d), the noisy image displays on the top left corner. \label{fig:demo}} \vspace*{-10pt}
 \end{figure*}

 Given a noiseless measurement $\bm{y}_{\text{nl}}:=\bm{A}\bm{x}$, the noise corruption process can be expressed as $\bm{y}=\bm{y}_{\text{nl}}+\bm{s}$. When $\bm{s}$ is a pixel-independent Gaussian, the likelihood $p(\bm{y}|\bm{x})$ leads to a least-squares loss in~\eqref{eq:2}. For non-Gaussian, pixel-dependent noise, the exact form of $p(\bm{y}|\bm{x})$ may be unknown. To address this, we approximate the noise distribution using a generalized Gaussian scale mixture (gGSM) model:
\begin{equation}
  \label{eq:3}
    P(s;\delta,q)  = \frac{q}{2\delta\Gamma(1/q)}\exp\left(-\frac{|s|^q}{\delta^q}\right),
\end{equation}
where $\delta,q$ are the distribution parameters, and $\Gamma(x)=\int_0^{\infty} t^{x-1}e^{-t} dt$ is the Gamma function. The gGSM contains two special known cases: $q=1$ corresponds to the Laplacian scale mixture model
\begin{equation}
  P(s;\theta) = \frac{1}{2\theta}\exp\left(-\frac{|s|}{\theta}\right)
\end{equation}
with parameter $\theta$; And $q=2$ yields the Gaussian distribution.

As a motivating example, we consider impulse noise, especially the salt-and-pepper noise~\citep{windyga2001fast}, which frequently arises in data acquisition and transmission due to faulty sensors or sharp and sudden disturbances in analog-to-digital conversion. Salt-and-pepper noise sets random pixels to the minimum or maximum intensity with equal probability, controlled by a noise level $r_{sp}$. Figure~\ref{fig:demo} compares the empirical distribution of impulse noise with several approximation models, including Gaussian, Laplacian, and gGSM. Among them, gGSM provides the best fit for modeling the noise distribution, typically with $q \in (0, 1]$.

Given that the gGSM model leads to an $\ell_q$-norm ($0 < q \leq 2$) data fidelity term in~\eqref{eq:2}, we aim to solve the corresponding minimization problem using a plug-and-play approach with generative priors. Specifically, we propose an efficient iteratively reweighted least squares (IRLS) algorithm to solve a surrogate problem derived from the $\ell_q$-norm formulation. This surrogate is then addressed using a forward-backward splitting method, where the proximal step is performed via a denoiser induced by a generative diffusion model.

 To summarize, our main contributions are as follows:
 \begin{enumerate}
 \item We introduce a plug-and-play image restoration framework for general noise removal, formulating an $\ell_q$-norm minimization problem using the gGSM model to characterize noise distributions;
 \item We propose an IRLS-based majorization-minimization (MM) scheme~\citep{sun2016majorization} to solve the $\ell_q$-norm sub-problem. This is efficiently addressed using forward-backward splitting, with the proximal step handled via a generative diffusion denoiser;
 \item To mitigate potential division-by-zero issues in IRLS, we propose a minor modification over the IRLS scheme with a decreasing hyperparameter, following~\citep{chartrand2008iteratively}.
 \end{enumerate}

\section{Related Works}

This section reviews traditional image restoration methods, plug-and-play approaches with classical and generative priors, and supervised learning-based restoration methods.

\subsection{Traditional Method for Image Restoration}

Traditional image restoration methods are typically distinguished by their choice of data fidelity term and image prior. A widely used class of priors involves sparsity-promoting regularization, such as total variation (TV) and its variants~\citep{blomgren1997total, vogel1996iterative}, or sparsity in wavelet-transformed domains~\citep{figueiredo2001wavelet}. These priors promote smoothness in the restored image, often prioritizing the recovery of low-frequency components. However, due to a mismatch between such handcrafted regularizations and the true image prior, the recovered images are generally suboptimal. The choice of data fidelity term is often noise-model-dependent. The $\ell_2$-norm is commonly used for Gaussian noise, while the $\ell_1$-norm is more suitable for sparse noise. For impulse noise, the $\ell_0$-norm is popular but challenging to optimize; it is often replaced by continuous concave approximations~\citep{koshelev2023iterative, zhang2015reweighted, ochs2015iteratively}.

\subsection{Plug-and-play Methods}

Since introduced in~\citep{venkatakrishnan2013plug}, the idea of plug-and-play has gained significant attention due to its flexibility and effectiveness in addressing diverse image restoration problems. With fixed data fidelity term, the regularized optimization is generally solved by a splitting method. The regularization term is handled implicitly through a denoising operation, which acts as a proximal step. There are many effective denoiser candidates can be used, such as BM3D~\citep{dabov2007image} denoiser and DnCNN/DRUnet denoiser~\citep{zhang2017beyond,zhang2021plug}. The incorporation of the denoiser into regularized optimization has been adopted to solve image restoration~\citep{zhang2020plug,ryu2019plug,venkatakrishnan2013plug,kamilov2017plug}, and medical image reconstruction~\citep{eksioglu2016decoupled,metzler2015bm3d,metzler2016bm3d}. The regularization-by-denoising (RED) is another way to explicitly formulate the regularization~\citep{romano2017little}.

\subsection{Plug-and-play with Generative Prior}

Recently, powerful diffusion generative models have been leveraged as priors for image restoration tasks. Rather than modeling image density directly, diffusion models learn to approximate the score function $\nabla_{\bm{x}}\log p(\bm{x})$~\citep{song2019generative, song2020score}. Among generative models, diffusion models have demonstrated superior performance compared to alternatives such as variational autoencoders (VAEs) and generative adversarial networks (GANs), making them highly appealing for plug-and-play image restoration. In zero-shot learning settings, off-the-shelf diffusion models can be used for observation-guided conditional sampling from the posterior $p(\bm{x}|\bm{y})$. A key step involves approximating the conditional score $\nabla_{\bm{x}_t}\log p(\bm{x}_t|\bm{y})$ using the identity:
\begin{equation*}
  \nabla_{\bm{x}_t}\log p(\bm{x}_t|\bm{y}) = \nabla_{\bm{x}_t}\log p(\bm{x}_t) + \nabla_{\bm{x}_t}\log p(\bm{y}|\bm{x}_t).
\end{equation*}
Thus, estimating $\nabla_{\bm{x}_t}\log p(\bm{y}|\bm{x}_t)$ becomes critical. \citet{chung2022diffusion} leveraged estimate $p(\bm{y}|\bm{x}_t)\simeq p(\bm{y}|\mathbb{E}[\bm{x}_0|\bm{x}_t])$, with $\mathbb{E}[\bm{x}_0|\bm{x}_t]$ estimated by Tweedie's formula. This approach requires backpropagation through the diffusion model, which is computationally intensive. Moreover, their approximation has been considered coarse. \citet{song2022pseudoinverse} proposed $\Pi$GDM method with a Gaussian estimate of unknown $p(\bm{x}_0|\bm{x}_t)$. However, this method is limited to linear degradation models with Gaussian noise and does not extend naturally to non-Gaussian noise. 

An alternative approach treats image restoration not as conditional generation but as optimization. DiffPIR~\citep{zhu2023denoising} applies half-quadratic splitting (HQS) to~\eqref{eq:2}, using a diffusion model as a denoiser, thereby avoiding backpropagation through the model. However, their method requires multiple iterations to solve each subproblem. Along this line, PnP-Flow~\citep{martin2024pnp} replaces HQS with forward-backward splitting~\citep{parikh2014proximal}, incorporating flow-based generative models instead of diffusion models. Additionally, \citet{mardanivariational} derived an explicit regularizer for diffusion priors, but the computation of its gradient is prohibitively expensive.

\subsection{Supervised Learning for Image Restoration}

The supervised learning solutions for image restoration typically train end-to-end convolutional neural networks using paired datasets~\citep{gilton2019neumann,meinhardt2017learning,jin2017deep}. For linear image restoration, the optimization unrolling scheme helps design an efficient network architecture, which enables better interpretation and training datasets reduction~\citep{adler2018learned,aggarwal2018modl}. These methods require retraining when the task changes, hence lacking flexibility for solving a series of tasks, even when the images share the same density.

\section{Methodology}

Before presenting our method, we give a brief introduction of the diffusion generative model, the formulated diffusion model regularized optimization and the majorization-minimization algorithm for $\ell_q$-norm minimization.

\subsection{Regularization via Diffusion Generative Model}

The diffusion model comprises of two processes: forward process and generative process. The forward process progressively corrupts clean data $\bm{x}_0$ using a sequence of pre-configured noise scales. The seminar work~\citep{song2020score} provided a continuous modeling of the diffusion generative model though the stochastic differential equation, we refer the reader to it for the details. For a self-contained organization, we follow the configurations of the denoising diffusion probability model (DDPM). Following~\citep{ho2020denoising}, the process is formulated as a Markov chain: $p(\bm{x}_i|\bm{x}_{i-1})=\mathcal{N}(\bm{x}_i;\sqrt{1-\beta_i}\bm{x}_{i-1},\beta_i\bm{I})$, where $\{\beta_i\}_{i=1}^N$ is a sequence of noise levels with $0<\beta_i<1$. By recursion, one can show that $p(\bm{x}_i|\bm{x}_0)=\mathcal{N}(\bm{x}_i;\sqrt{\alpha_i}\bm{x}_0,(1-\alpha_i)\bm{I})$, where $\alpha_i:=\Pi_{j=1}^i(1-\beta_j)$.

To generate new samples, we aim to reverse this process. However, the exact reverse distribution $q(\bm{x}_{i-1}|\bm{x}_i)$ is intractable. Therefore, \citet{ho2020denoising} proposed to learn a Gaussian approximation $p_{\theta}(\bm{x}_{i-1}|\bm{x}_i)$ to approximate the true conditional probabilities $q(\bm{x}_{i-1}|\bm{x}_i)$. We known that
\begin{equation}
  q(\bm{x}_{i-1}|\bm{x}_i,\bm{x}_0) = p(\bm{x}_i|\bm{x}_{i-1},\bm{x}_0)\frac{p(\bm{x}_{i-1}|\bm{x}_0)}{p(\bm{x}_i|\bm{x}_0)} \text{ is a Gaussian}.
\end{equation}
Specially, the mean and variance are
\begin{equation*}
  \begin{aligned}
    \tilde{\mu}(\bm{x}_i,\bm{x}_0) &= \frac{\sqrt{1-\beta_i}(1-\alpha_{i-1})}{1-\alpha_i}\bm{x}_i + \frac{\sqrt{\alpha_{i-1}}}{1-\alpha_i}\bm{x}_0,\\
    \tilde{\sigma}^2 &= \frac{1-\alpha_{i-1}}{1-\alpha_i}\cdot\beta_i.
  \end{aligned}
\end{equation*}
Since $\bm{x}_0$ is unknown during sampling, we estimate $\bm{x}_0$ using $\bm{x}_t$. The estimate of $p(\bm{x}_0|\bm{x}_t)$ with given $p(\bm{x}_t|\bm{x}_0)$ is generally intractable, however, the mean of $p(\bm{x}_0|\bm{x}_i)$ can be computed using the Tweedie's formula
\begin{equation}\label{eq:twee0}
  \mathbb{E}[\bm{x}_0|\bm{x}_i] = \frac{\bm{x}_i + (1-\alpha_i)\nabla_{\bm{x}_i}\log p(\bm{x}_i)}{\sqrt{\alpha_i}}.
\end{equation}
Diffusion model aims to train a network $\bm{s}_{\theta}(\bm{x}_i,i)$ to predict the score function $\nabla_{\bm{x}_i}\log p(\bm{x}_i)$. Alternatively, DDPM trained a network $\bm{\epsilon}_{\theta}(\bm{x}_i,i)\simeq -\sqrt{1-\alpha_i}\nabla_{\bm{x}_i}\log p(\bm{x}_i)$. Hence
\begin{equation}\label{eq:twee}
  \mathbb{E}[\bm{x}_0|\bm{x}_i] \simeq  \frac{\bm{x}_i - \sqrt{1-\alpha_i}\bm{\epsilon}_{\theta}(\bm{x}_i,i)}{\sqrt{\alpha_i}}.
\end{equation}

Note that $\bm{x}_0 =\frac{\bm{x}_i - \sqrt{1-\alpha_i}\bm{z}}{\sqrt{\alpha_i}}$, comparing the formula to~\eqref{eq:twee}, hence the training loss
\begin{equation}\label{eq:ls0}
  \mathcal{L}:= \sum_{i}\frac{1-\alpha_i}{\alpha_i}\norm{\bm{\epsilon}_{\theta}(\bm{x}_i,i) - \bm{z}}_2^2,
\end{equation}
where $\bm{x}_i = \sqrt{\alpha_i}\bm{x}_0 + \sqrt{1-\alpha_i}\bm{z}$. When deep diffusion model is available, we can perform DDPM, DDIM~\citep{song2021denoising}, and other method to perform the unconditional sampling.

The training loss~\eqref{eq:ls0} actually can be expressed as
\begin{equation*}
  \mathcal{L}:= \sum_{i} \norm{\text{Denoise}[\bm{x}_i]-\bm{x}_0}_2^2,
\end{equation*}
where $\text{Denoise}[\bm{x}_i]$ is the MMSE denoiser, which is exactly the equation~\eqref{eq:twee}. Using this in a regularized optimization framework, we propose to solve:
\begin{equation}\label{eq:final_opt}
  \min_{\bm{x}} \quad \frac{1}{\lambda}\norm{\bm{A}\bm{x}-\bm{y}}_q^q + \sum_{i} \norm{\text{Denoise}[\bm{x}_i]-\bm{x}}_2^2,
\end{equation}
where $\bm{x}_i=\sqrt{\alpha_i}\bm{x}+\sqrt{1-\alpha_i}\bm{z}$ and $\lambda=\delta^q$. The parameters $\delta$ and $q$ are estimated from the gGSM model for the considered noise process of image restoration.

\subsection{Majorization-minimization Algorithm}

We aim to solve the optimization problem with the following formulation:
\begin{equation}\label{eq:mm}
  \min \mathcal{Q}(\bm{x}):=\frac{1}{\lambda}\norm{\bm{A}\bm{x}-\bm{y}}_q^q + \mathcal{R}(\bm{x}),
\end{equation}
where the definition $\norm{\bm{v}}_q^q = \sum_{i=1}^n |v_i|^q$ ($0<q\leq 2$) for a vector $\bm{v}\in\mathbb{R}^n$. Direct minimization is difficult due to the non-smooth and non-convex nature of the $\ell_q$-norm. To address this, we adopt a majorization-minimization (MM) approach~\citep{sun2016majorization}, which iteratively minimizes an upper-bound surrogate function. At each iteration $k$, we construct a surrogate $\mathcal{Q}^k(\bm{x}; \bm{x}_k)$ such that:
\begin{equation}
  \mathcal{Q}(\bm{x})\leq \mathcal{Q}^k(\bm{x};\bm{x}_k),\quad \mathcal{Q}(\bm{x}_k) = \mathcal{Q}^k(\bm{x}_k;\bm{x}_k).
\end{equation}
Minimizing $\mathcal{Q}^k$ guarantees:
\begin{equation}
  \mathcal{Q}(\bm{x}_{k+1})\leq \mathcal{Q}^k(\bm{x}_{k+1};\bm{x}_k) \leq \mathcal{Q}^k(\bm{x}_{k};\bm{x}_k) = \mathcal{Q}(\bm{x}_{k}),
\end{equation}
ensuring that $\mathcal{Q}(\bm{x}_k)$ is non-increasing. Under mild conditions, the sequence ${\bm{x}_k}$ converges to a stationary point of~\eqref{eq:mm}.

To majorize the $\norm{\bm{A}\bm{x}-\bm{y}}_q^q$ loss term, we use the following inequality: 
\begin{lemma}
  For any $x, y \in \mathbb{R}$ with $y \neq 0$ and $0 < q \leq 2$, the following inequality holds:
  \begin{equation}\label{eq:ine}
    |x|^q\leq \frac{q}{2}|y|^{q-2}x^2 + \frac{2-q}{2}|y|^q.
  \end{equation}
\end{lemma}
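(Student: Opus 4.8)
The plan is to prove the pointwise inequality \eqref{eq:ine} by reducing it to a one-dimensional convexity/concavity argument. First I would dispose of the trivial case $x = 0$, where the left side is $0$ and the right side is $\frac{2-q}{2}|y|^q \geq 0$ since $q \leq 2$. For $x \neq 0$, since both sides depend only on $|x|$ and $|y|$, I would set $t = x^2/y^2 > 0$ and divide through by $|y|^q$; the inequality becomes $t^{q/2} \leq \frac{q}{2} t + \frac{2-q}{2}$. So it suffices to show $f(t) := t^{q/2} \leq \frac{q}{2}t + \frac{2-q}{2}$ for all $t > 0$, with equality at $t = 1$ (which gives the tangency condition $\mathcal{Q}(\bm{x}_k) = \mathcal{Q}^k(\bm{x}_k;\bm{x}_k)$ needed by the MM framework).

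The function $g(t) := t^{q/2}$ is concave on $(0,\infty)$ because $0 < q/2 \leq 1$, so $g''(t) = \frac{q}{2}\left(\frac{q}{2}-1\right) t^{q/2 - 2} \leq 0$. A concave function lies below each of its tangent lines; the tangent line at $t = 1$ is $g(1) + g'(1)(t-1) = 1 + \frac{q}{2}(t-1) = \frac{q}{2}t + \frac{2-q}{2}$, which is exactly the right-hand side. Hence $t^{q/2} \leq \frac{q}{2}t + \frac{2-q}{2}$ for all $t > 0$, completing the proof. Alternatively, one can define $h(t) = \frac{q}{2}t + \frac{2-q}{2} - t^{q/2}$ and check directly that $h(1) = 0$, $h'(1) = 0$, and $h'(t) = \frac{q}{2}(1 - t^{q/2-1})$ is negative for $t < 1$ and positive for $t > 1$, so $t = 1$ is the unique global minimizer with value $0$.

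I do not anticipate a genuine obstacle here, as the statement is a standard tangent-line bound for concave power functions; the only point requiring a little care is making sure the normalization $t = x^2/y^2$ correctly absorbs the $|y|^{q-2}$ and $|y|^q$ factors (using $|y|^{q-2} x^2 = |y|^q \cdot (x^2/y^2) = |y|^q t$), and noting that the hypothesis $y \neq 0$ is exactly what makes this division legitimate. The boundary case $q = 2$ reduces to the identity $|x|^2 = |y|^0 x^2$, and $q \to 0^+$ degenerates gracefully, so the range $0 < q \leq 2$ is handled uniformly.
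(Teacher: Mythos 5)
Your proof is correct and follows essentially the same route as the paper's: the paper's one-line argument ("consider $f(x)=|x|^{q/2}$, apply the mean value theorem, and substitute $x\leftarrow x^2$") is precisely the tangent-line bound for the concave function $u\mapsto u^{q/2}$ at $u=y^2$, which is what you carry out (after the harmless normalization $t=x^2/y^2$). Your version is simply a more careful write-up of the same idea, including the equality-at-$x=y$ check needed for the MM majorization.
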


\begin{proof}
  Consider the function $f(x)=|x|^{q/2}$, by mean value theorem, the result holds by taking $x\leftarrow x^2$.
\end{proof}

\begin{lemma}\label{lem:2}
  For any $\bm{x}\in\mathbb{R}^d$, consider the function $f(\bm{x})=\norm{\bm{A}\bm{x}-\bm{y}}_q^q$, then we have
  \begin{equation}
    \norm{\bm{A}\bm{x}-\bm{y}}_q^q\leq \frac{q}{2}\norm{\bm{W}_k\circ (\bm{A}\bm{x}-\bm{y})}_2^2 + \text{const},
  \end{equation}
  where the constant term is independent of $\bm{x}$. $\bm{W}_k:=  ((\bm{A}\bm{x}_k-\bm{y})^2 +\bm{\epsilon}_k)^{\frac{q-2}{4}}$ is the reweighted matrix and $\circ$ denotes the element-wise (Hadamard) product.
\end{lemma}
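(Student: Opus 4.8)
The plan is to reduce the claim to a coordinatewise bound and then sum, using a slightly smoothed version of Lemma~1 to accommodate the perturbation $\bm{\epsilon}_k$. Set $\bm{r}:=\bm{A}\bm{x}-\bm{y}\in\mathbb{R}^m$ and $\bm{r}_k:=\bm{A}\bm{x}_k-\bm{y}$, so that $\norm{\bm{A}\bm{x}-\bm{y}}_q^q=\sum_{i=1}^{m}|r_i|^q$. For each coordinate $i$ I want an upper bound on $|r_i|^q$ that is quadratic in $r_i$ with a leading coefficient depending only on $\bm{x}_k$ and $\bm{\epsilon}_k$; summing such bounds will produce $\frac{q}{2}\norm{\bm{W}_k\circ\bm{r}}_2^2$ plus an $\bm{x}$-independent remainder.

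The key step is the following refinement of Lemma~1: for every $x\in\mathbb{R}$, every $s>0$, and $0<q\le 2$,
\begin{equation*}
  |x|^q \;\le\; \frac{q}{2}\,s^{\frac{q-2}{2}}\,x^2 \;+\; \frac{2-q}{2}\,s^{q/2}.
\end{equation*}
I would obtain this exactly as in the proof of Lemma~1: the map $g(t):=t^{q/2}$ is concave on $[0,\infty)$ when $0<q\le 2$ (its second derivative is $\frac{q}{2}\cdot\frac{q-2}{2}\,t^{q/2-2}\le 0$), so it lies below its tangent line at any point $s>0$, i.e.\ $g(t)\le g(s)+g'(s)(t-s)$; substituting $t=x^2$ and simplifying yields the display. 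Choosing $s=y^2$ (with $y\neq 0$) recovers Lemma~1, but the point is that here we may take $s>0$ freely --- which is precisely why the statement carries the smoothing $\bm{\epsilon}_k$ instead of invoking Lemma~1 termwise.

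To conclude, I would apply the refinement at each $i$ with $x\leftarrow r_i$ and $s\leftarrow r_{k,i}^2+\epsilon_{k,i}$; this is valid because $\epsilon_{k,i}>0$ guarantees $s>0$. Since $s^{\frac{q-2}{2}}=\big((\bm{A}\bm{x}_k-\bm{y})_i^2+\epsilon_{k,i}\big)^{\frac{q-2}{2}}=(\bm{W}_k)_i^2$, summing over $i$ gives
\begin{equation*}
\begin{aligned}
  \norm{\bm{A}\bm{x}-\bm{y}}_q^q
  &\le \frac{q}{2}\sum_{i=1}^{m}(\bm{W}_k)_i^2\,r_i^2 \;+\; \frac{2-q}{2}\sum_{i=1}^{m}\big(r_{k,i}^2+\epsilon_{k,i}\big)^{q/2}\\
  &= \frac{q}{2}\,\norm{\bm{W}_k\circ(\bm{A}\bm{x}-\bm{y})}_2^2 \;+\; \text{const},
\end{aligned}
\end{equation*}
where the second sum depends only on $\bm{x}_k$ and $\bm{\epsilon}_k$ and is therefore constant in $\bm{x}$. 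The only genuine obstacle is the coordinates at which the current residual $r_{k,i}$ vanishes: a direct termwise application of Lemma~1 fails there, since that lemma requires $y\neq 0$. Introducing $\bm{\epsilon}_k$ removes this issue, the (mild) price being that the surrogate is only approximately tangent to $\mathcal{Q}$ at $\bm{x}_k$, with exact tangency recovered as $\bm{\epsilon}_k\to\bm{0}$.
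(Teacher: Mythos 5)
Your proof is correct and follows essentially the same route as the paper's: apply an $\epsilon$-smoothed version of the scalar inequality coordinatewise with $s=(\bm{A}\bm{x}_k-\bm{y})_i^2+\epsilon_k$, identify $s^{(q-2)/2}$ with $(\bm{W}_k)_i^2$, and absorb the remainder $\tfrac{2-q}{2}\sum_i\bigl((\bm{A}\bm{x}_k-\bm{y})_i^2+\epsilon_k\bigr)^{q/2}$ into the $\bm{x}$-independent constant. Your tangent-line/concavity derivation of the smoothed inequality is in fact more explicit than the paper's, which merely asserts the modified bound without proof.
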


\subsection{Iteratively Reweighted Least-squares}

Using the MM approach, we solve the regularized least-squares minimization at each iteration:
\begin{equation}\label{eq:ls}
  \min_{\bm{x}}\quad \frac{q}{2\lambda}\norm{\bm{W}_k\circ (\bm{A}\bm{x}-\bm{y})}_2^2 + \sum_{i} \norm{\text{Denoise}[\bm{x}_i]-\bm{x}}_2^2.
\end{equation}
To solve~\eqref{eq:ls}, we adopt the forward-backward splitting method~\citep{parikh2014proximal}, which alternates between a gradient descent step and a proximal step:
\begin{equation}\label{eq:sub2}
  \begin{aligned}
    \bm{x}_{t} &= \bm{x}_{t}- \frac{s_t\eta_k q}{\lambda}\bm{A}^T(\bm{W}_k^2\circ (\bm{A}\bm{x}_{t}-\bm{y}))\\
    \bm{x}_{t+1} &= \min_{\bm{x}}\quad \frac{1}{2\eta_k}\norm{\bm{x}-\bm{x}_{t}}_2^2 + \sum_{i} \norm{\text{Denoise}[\bm{x}_i]-\bm{x}}_2^2,
  \end{aligned}
\end{equation}
where $s_t$ is a tunable stepsize. For convergence, it is typically required that $s_t \eta_k < \frac{2}{L}$, where $L$ is the Lipschitz constant of the gradient of the least-squares term.

Solving the proximal step exactly requires computing the gradient of the regularization term, which involves backpropagation through the denoiser—this is computationally expensive. While some specific formulations allow for gradient simplification or avoidance~\citep{mardanivariational}, evaluating the full summation remains costly. To mitigate this, we adopt a plug-and-play (PnP) strategy: approximate the proximal step using a denoiser derived from the diffusion model.

To align the noise level $\eta_k$ with the noise distribution expected by the diffusion model, we set $\eta_k = (1 - \alpha_k)/\alpha_k$. However, the intermediate iterate $\bm{x}_t$ generally does not match the input distribution for which the diffusion model $\bm{\epsilon}\theta$ is trained. A common remedy is to renoise $\bm{x}_t$ by adding Gaussian noise:
\begin{equation*}
  \begin{aligned}
    \bm{x}_t' &= \sqrt{\alpha_k}\bm{x}_t + \sqrt{1-\alpha_k}\bm{z}\\
    \bm{x}_{t+1} & = \text{Denoise}[\bm{x}_t'].
  \end{aligned}
\end{equation*}
The `renoising step' is often referred to as time reversal, has been widely used in diffusion-based conditional generation for image restoration~\citep{choi2021ilvr, wang2022zero}.

\setlength{\textfloatsep}{8pt}
\begin{algorithm}[!ht]
\caption{Image restoration with general noise removal via plug-and-play diffusion prior.\label{alg:2}}
\begin{algorithmic}[1]
   \REQUIRE {Outer iterations $T$, pretrained diffusion model $\bm{\epsilon}_{\theta}$ and the noise schedule $\{\alpha_t\}$, hyperparameter $q$ ($0<q\leq 2$), pre-defined noise level $\{\eta_t=\frac{1-\alpha_t}{\alpha_t}\}$, perturbation sequences $\{\epsilon_t\}$, inter iteration $T_{inter}$}
 \ENSURE {Estimated  image $\bm{x}_0$.}
 \STATE Set $\bm{x}_T=\bm{z}\sim \mathcal{N}(0,\bm{I})$
 \FOR{$t = T:-1:1$}
  \STATE Set $\bm{x}_0 = \bm{x}_t$ and the reweighted matrix\\
    $\bm{W}_t = ((\bm{A}\bm{x}_t-\bm{y})^2+\epsilon_t)^{q-2}$
  \STATE \%\% Solve MM~\eqref{eq:ls} via forward-backward splitting
  \FOR{$i= 0:1:T_{inter}-1$}
  \STATE \%\% Gradient descent step:\\
  $\bm{x}_i=\bm{x}_i-s_i\eta_t\bm{A}^T(\bm{W}_t(\bm{A}\bm{x}_i-\bm{y}))$
  \STATE \%\% Denoising step:\\
  \begin{equation*}
    \begin{aligned}
      \bm{x}_i' &= \sqrt{\alpha_t}\bm{x}_i + \sqrt{1-\alpha_t}\bm{z}\\
      \bm{x}_{i+1}& = \frac{\bm{x}_i'-\sqrt{1-\alpha_t}\bm{\epsilon}_{\theta}(\bm{x}_i',t)}{\sqrt{\alpha_t}}
    \end{aligned}
  \end{equation*}
  \ENDFOR
  \STATE \%\% Set the update
  $\bm{x}_{t-1}=\bm{x}_{T_{inter}}$
\ENDFOR
\end{algorithmic}
\end{algorithm}

\noindent{\bf Overflow avoidance via decreasing perturbation $\bm{\epsilon}_k$}\ \

To prevent overflow, we introduce a small perturbation $\epsilon_k > 0$ in the weighted matrix $\bm{W}_k= ((\bm{A}\bm{x}_k-\bm{y})^2+\epsilon_k)^{\frac{q-2}{2}}$. Moreover, the choice of $\epsilon_k$ affects both convergence speed and the sparsity of the solution (due to the nonconvex nature of the $\ell_q$ term). Following~\citep{chartrand2008iteratively}, we employ a decreasing perturbation schedule ${\epsilon_k}$, which both stabilizes early iterations and improves sparsity in the later stage.

To summarize, we propose an image restoration method that addresses general noise via an $\ell_q$-norm fidelity term, combined with a plug-and-play diffusion prior. The resulting iteratively reweighted least-squares problem is solved using forward-backward splitting. The proximal step leverages the denoiser induced by a diffusion generative model, with careful renoising to ensure effective denoising. The complete algorithm is presented in Algorithm~\ref{alg:2}.

\begin{figure}[!htbp]
   \centering \vspace*{-5pt}
   \begin{tabular}{c@{\hspace*{3pt}}c@{\hspace*{1pt}}c@{\hspace*{1pt}}c@{\hspace*{1pt}}c@{\hspace*{1pt}}c@{\hspace*{1pt}}c@{\hspace*{1pt}}c@{\hspace*{1pt}}c@{\hspace*{1pt}}c@{\hspace*{2pt}}c@{\hspace*{2pt}}c@{\hspace*{2pt}}c@{\hspace*{1pt}}}
 \rotatebox[origin=c]{90}{Deblur} &\raisebox{-0.5\height}{\includegraphics[width = 0.11\textwidth]{./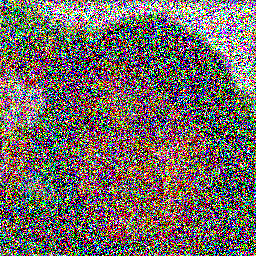}}
 & \raisebox{-0.5\height}{\includegraphics[width = 0.11\textwidth]{./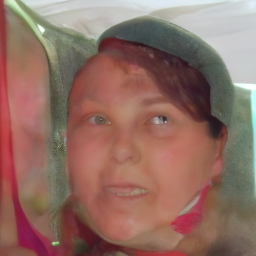}}
 & \raisebox{-0.5\height}{\includegraphics[width = 0.11\textwidth]{./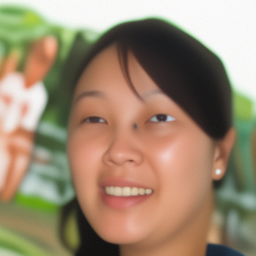}}
     & \raisebox{-0.5\height}{\includegraphics[width = 0.11\textwidth]{./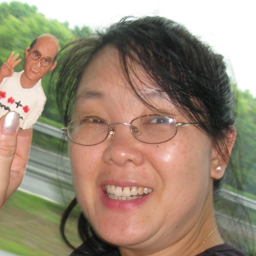}}\\[-2pt]
     \rotatebox[origin=c]{90}{Inpainting} &\raisebox{-0.5\height}{\includegraphics[width = 0.11\textwidth]{./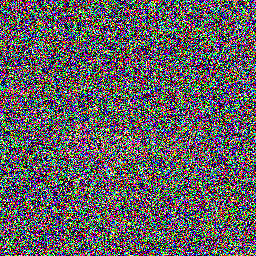}}
 & \raisebox{-0.5\height}{\includegraphics[width = 0.11\textwidth]{./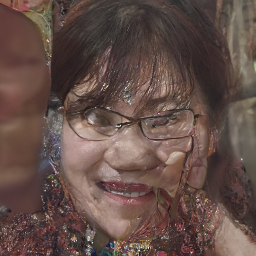}}
 & \raisebox{-0.5\height}{\includegraphics[width = 0.11\textwidth]{./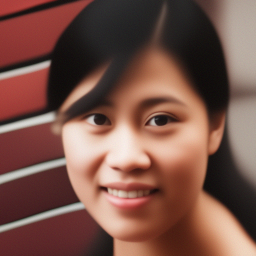}}
 & \raisebox{-0.5\height}{\includegraphics[width = 0.11\textwidth]{./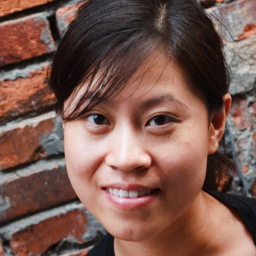}}\\
    &  \small{Input} & \small{DPS variant} & \small{Ours} & \small{GT}\\
 \end{tabular}\vspace*{-8pt}
 \caption{\textbf{Visualization of different methods on two tasks with impulse noise for FFHQ. The DPS equipped with the IRLS loss produces artifact-corrupted restoration.}\label{fig:dps_comp} } \vspace*{-15pt}
 \end{figure} 

\subsection{Updated Losses for DPS Method}

The $\ell_q$-norm data fidelity term can also be incorporated into the existing Diffusion Posterior Sampling (DPS) framework~\citep{chung2023fast} to address image restoration with general noise corruption.

To adapt DPS for impulse noise, we consider two straightforward modifications to the standard DPS pipeline, which is originally designed for Gaussian noise. These modifications alter the measurement guidance step in DPS, where the transition from unconditional to conditional sampling occurs. The first modification directly applies gradient descent on the $\ell_q$-norm fidelity term during the measurement guidance step. Specifically, we perform the following updates:
\begin{equation*}
  \begin{aligned}
    \bm{x}_{t-1}&=\text{UncondSample}(\bm{x}_t,\bm{\epsilon}_{\theta}(\bm{x}_t,t))\\
    \bm{x}_{t-1}&=\bm{x}_{t-1}-\rho_t\nabla_{\bm{x}_t}\norm{\bm{A}\hat{\bm{x}}_0(\bm{x}_t)-\bm{y}}_q^q,
  \end{aligned}
\end{equation*}
where $\hat{\bm{x}}_0(\bm{x}_t)$ is the estimate of $\mathbb{E}[\bm{x}_0|\bm{x}_t]$ using Tweedie's formula. However, this modification suffers from poor performance due to the nonconvexity of the $\ell_q$-norm (especially for $q < 1$), which leads to unstable and suboptimal reconstructions. 

Alternatively, following the IRLS of our approach, one can perform the measurement guidance step with the following step:
\begin{equation}
  \bm{x}_{t-1}=\bm{x}_{t-1}-\rho_t\nabla_{\bm{x}_t}\norm{\bm{W}_t\circ (\bm{A}\hat{\bm{x}}_0(\bm{x}_t)-\bm{y})}_2^2,
\end{equation}
where $\bm{W}_t = ((\bm{A}\bm{x}_t-\bm{y})^2+\epsilon_t)^{\frac{q-2}{4}}$. This modified DPS method performs better than the naive $\ell_q$ gradient approach and yields feasible restoration results. However, it still suffers from visible artifacts and reduced fidelity compared to our full IRLS-based plug-and-play method. Comparative results are provided in Figure~\ref{fig:dps_comp}.

\subsection{Applicability for Other Generative Models}

Our method integrated a plug-and-play (PnP) diffusion prior with an $\ell_q$-norm fidelity term to address general noise. As a generalizable PnP framework, its core innovation lies in how the denoiser (induced by a generative model) is incorporated into the iterative solver. Most PnP methods differ primarily in their choice of denoiser. Conventional denoisers are typically trained for Gaussian noise at a fixed noise level, requiring paired datasets. In contrast, diffusion models naturally yield a continuum of denoisers across a range of noise levels. To exploit this, we introduce a renoising step that ensures the input to the denoiser matches the Gaussian noise distribution expected by the diffusion model.

Due to the modularity of PnP models, all available generative models can be integrated in our proposed framework. Hence, given any generative model, including the flow matching model~\citep{lipmanflow}, diffusion model~\citep{ho2020denoising}, and rectified flow model~\citep{liuflow}, we can follow the algorithmic flowchart to restore images from its degradations. The minor modification is the replacement of the denoiser induced by the adopted  generative models. To show the flexibility of our algorithm, we consider image restoration with general noise removal with integration of the diffusion model, and the flow matching model.

\section{Numerical Experiments}

\begin{figure*}[!ht]
    \centering 
    \begin{tabular}{c@{\hspace*{3pt}}c@{\hspace*{1pt}}c@{\hspace*{1pt}}c@{\hspace*{1pt}}c@{\hspace*{1pt}}c@{\hspace*{1pt}}c@{\hspace*{1pt}}c@{\hspace*{1pt}}c@{\hspace*{1pt}}c@{\hspace*{2pt}}c@{\hspace*{2pt}}c@{\hspace*{2pt}}c@{\hspace*{1pt}}}
  \rotatebox[origin=c]{90}{Denoising} &\raisebox{-0.5\height}{\includegraphics[width = 0.15\textwidth]{./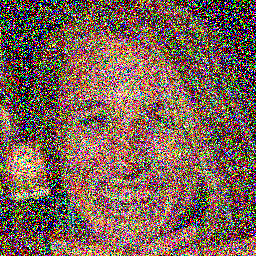}}
  & \raisebox{-0.5\height}{\includegraphics[width = 0.15\textwidth]{./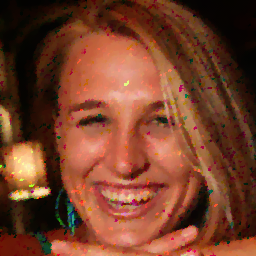}}
  & \raisebox{-0.5\height}{\includegraphics[width = 0.15\textwidth]{./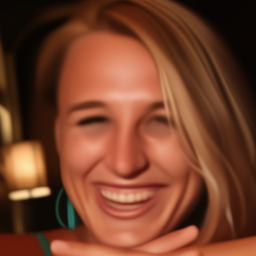}}
  & \raisebox{-0.5\height}{\includegraphics[width = 0.15\textwidth]{./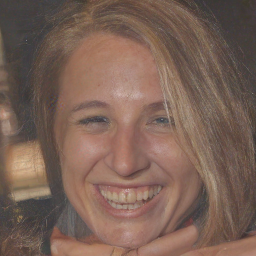}}
  & \raisebox{-0.5\height}{\includegraphics[width = 0.15\textwidth]{./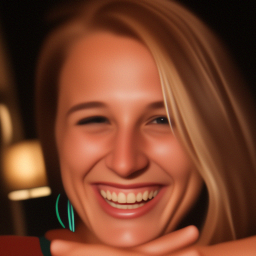}}
  & \raisebox{-0.5\height}{\includegraphics[width = 0.15\textwidth]{./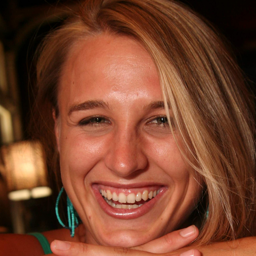}}\\[-2pt]
\rotatebox[origin=c]{90}{Deblur} &\raisebox{-0.5\height}{\includegraphics[width = 0.15\textwidth]{./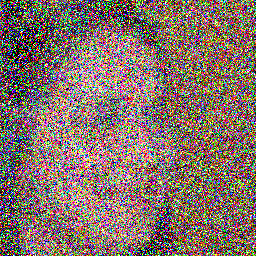}}
& \raisebox{-0.5\height}{\includegraphics[width = 0.15\textwidth]{./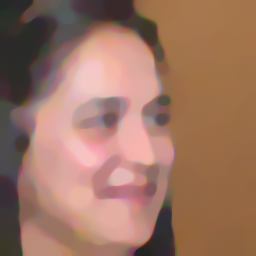}}
& \raisebox{-0.5\height}{\includegraphics[width = 0.15\textwidth]{./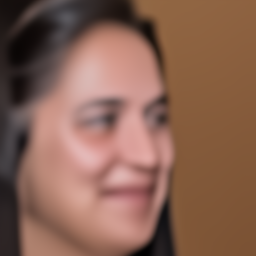}}
& \raisebox{-0.5\height}{\includegraphics[width = 0.15\textwidth]{./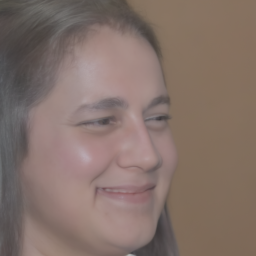}}
& \raisebox{-0.5\height}{\includegraphics[width = 0.15\textwidth]{./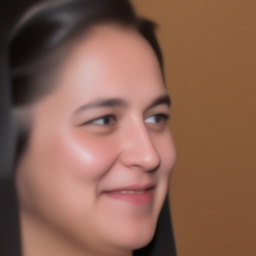}}
& \raisebox{-0.5\height}{\includegraphics[width = 0.15\textwidth]{./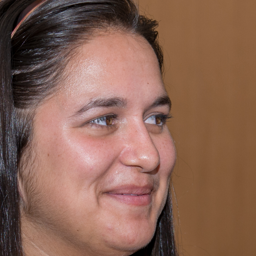}}\\[-2pt]
\rotatebox[origin=c]{90}{Inpaint} &\raisebox{-0.5\height}{\includegraphics[width = 0.15\textwidth]{./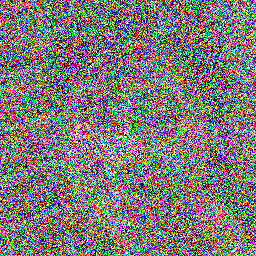}}
& \raisebox{-0.5\height}{\includegraphics[width = 0.15\textwidth]{./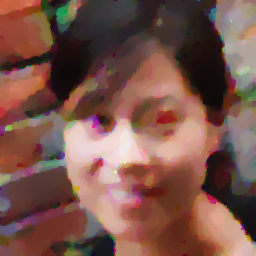}}
& \raisebox{-0.5\height}{\includegraphics[width = 0.15\textwidth]{./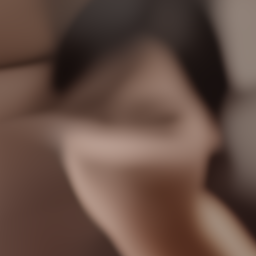}}
& \raisebox{-0.5\height}{\includegraphics[width = 0.15\textwidth]{./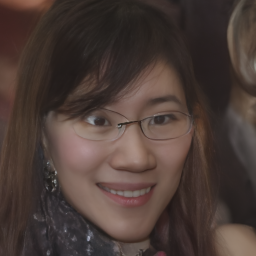}}
& \raisebox{-0.5\height}{\includegraphics[width = 0.15\textwidth]{./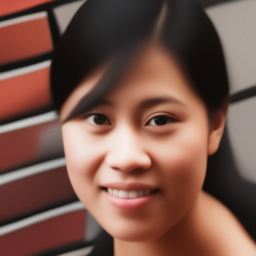}}
& \raisebox{-0.5\height}{\includegraphics[width = 0.15\textwidth]{./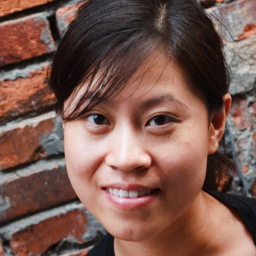}}\\[-2pt]
\rotatebox[origin=c]{90}{SR} &\raisebox{-0.5\height}{\includegraphics[width = 0.15\textwidth]{./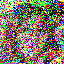}}
& \raisebox{-0.5\height}{\includegraphics[width = 0.15\textwidth]{./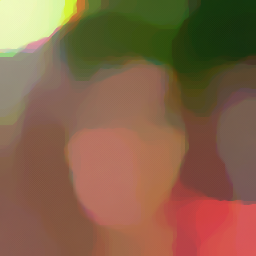}}
& \raisebox{-0.5\height}{\includegraphics[width = 0.15\textwidth]{./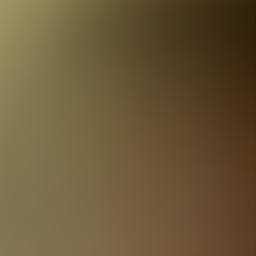}}
& \raisebox{-0.5\height}{\includegraphics[width = 0.15\textwidth]{./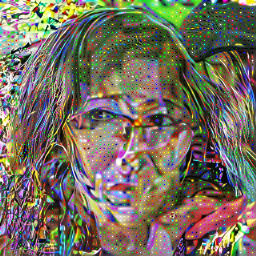}}
& \raisebox{-0.5\height}{\includegraphics[width = 0.15\textwidth]{./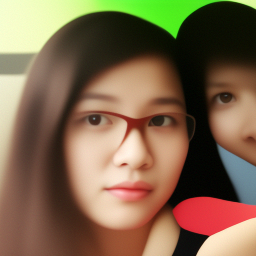}}
& \raisebox{-0.5\height}{\includegraphics[width = 0.15\textwidth]{./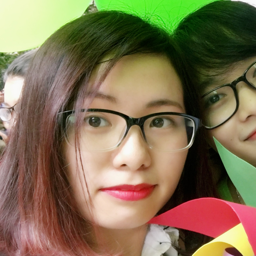}}\\[-2pt]
  \rotatebox[origin=c]{90}{Denoising} &\raisebox{-0.5\height}{\includegraphics[width = 0.15\textwidth]{./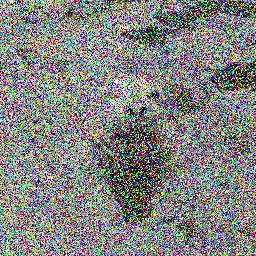}}
  & \raisebox{-0.5\height}{\includegraphics[width = 0.15\textwidth]{./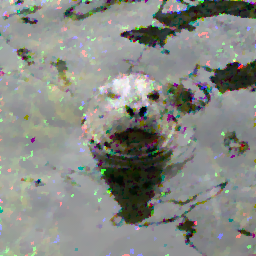}}
  & \raisebox{-0.5\height}{\includegraphics[width = 0.15\textwidth]{./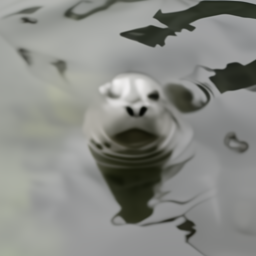}}
  & \raisebox{-0.5\height}{\includegraphics[width = 0.15\textwidth]{./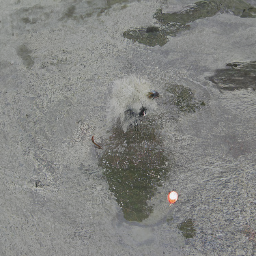}}
  & \raisebox{-0.5\height}{\includegraphics[width = 0.15\textwidth]{./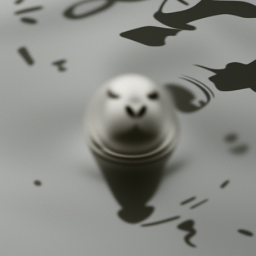}}
      & \raisebox{-0.5\height}{\includegraphics[width = 0.15\textwidth]{./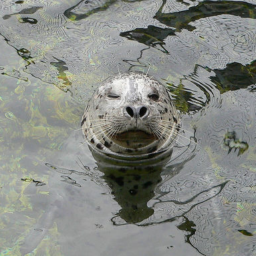}}\\[-2pt]
   \rotatebox[origin=c]{90}{Deblur} &\raisebox{-0.5\height}{\includegraphics[width = 0.15\textwidth]{./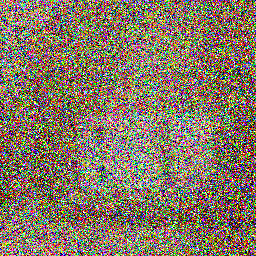}}
  & \raisebox{-0.5\height}{\includegraphics[width = 0.15\textwidth]{./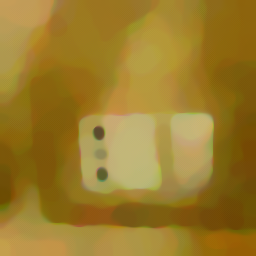}}
  & \raisebox{-0.5\height}{\includegraphics[width = 0.15\textwidth]{./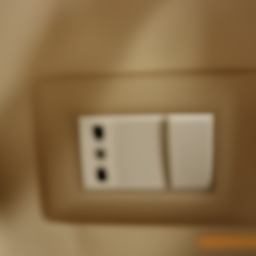}}
  & \raisebox{-0.5\height}{\includegraphics[width = 0.15\textwidth]{./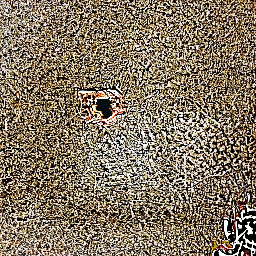}}
  & \raisebox{-0.5\height}{\includegraphics[width = 0.15\textwidth]{./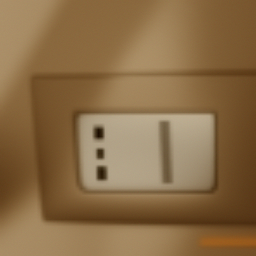}}
      & \raisebox{-0.5\height}{\includegraphics[width = 0.15\textwidth]{./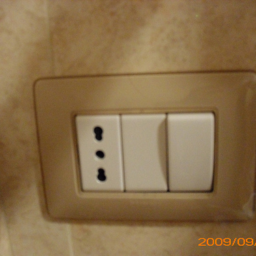}}\\[-2pt]
   \rotatebox[origin=c]{90}{Inpaint} &\raisebox{-0.5\height}{\includegraphics[width = 0.15\textwidth]{./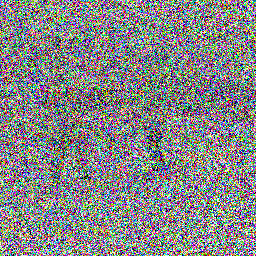}}
  & \raisebox{-0.5\height}{\includegraphics[width = 0.15\textwidth]{./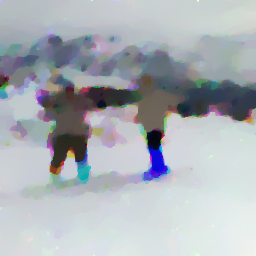}}
  & \raisebox{-0.5\height}{\includegraphics[width = 0.15\textwidth]{./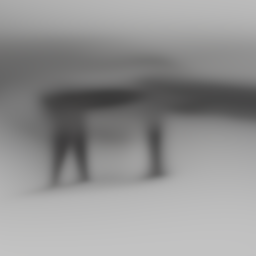}}
  & \raisebox{-0.5\height}{\includegraphics[width = 0.15\textwidth]{./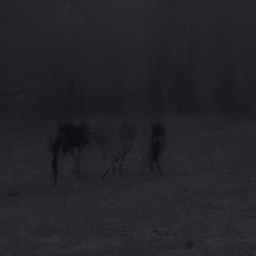}}
  & \raisebox{-0.5\height}{\includegraphics[width = 0.15\textwidth]{./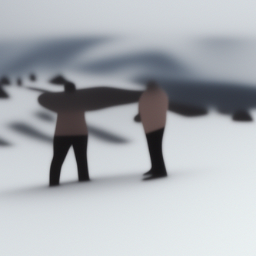}}
  & \raisebox{-0.5\height}{\includegraphics[width = 0.15\textwidth]{./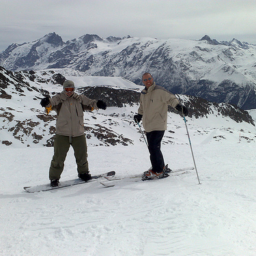}}\\ 
  &  \small{Input} & \small{PnP-TV} & \small{DRUnet} & \small{DPS} & \small{Ours} & \small{GT}\\
  \end{tabular}\vspace*{-2pt}
  \caption{\textbf{Visualization of different methods on four tasks with impulse noise for FFHQ and ImageNet. }\label{fig:ffhq_} } \vspace*{-6pt}
  \end{figure*} 

To evaluate the performance of our algorithm for image restoration with general noise via gGSM approximation, we conduct experiments on various image degradations with simulated impulse salt-and-pepper noise. We consider three benchmark datasets with image size $256\times 256$, including Flickr Faces High Quality (FFHQ)~\citep{karras2019style}, ImageNet~\citep{deng2009imagenet}, and Animal Faces HQ (AFHQ) with cat category. The publicly available generative models for the first two datasets are trained using diffusion model, and both of them are built on VP SDE. For FFHQ, we use the pretrained DDPM model from~\citep{choi2021ilvr}, and for ImageNet, we use the pretrained DDPM model from~\citep{dhariwal2021diffusion}. For the AFHQ-Cat dataset, we access the pretrained flow matching model~\citep{martin2024pnp} with a time point definition conversion to follow the convention of flow model. All models are used as-is without task-specific fine-tuning. For FFHQ and ImageNet, we randomly select 100 testing images form 1K tested images to evaluate the algorithm, while for AFHQ, we report the result on 32 testing images. The induced denoiser of diffusion and flow model can be obtained using Tweedie's formula. 

\noindent{\bf Evaluated imaging problems}\ \
The considered four image restoration problems are as follows. (a) Denoising; (b) deblurring using a $61\times 61$ Gaussian kernel; (c) random pixel inpainting with $70\%$ missing  pixels. (d) supper-resolution with $4\times$ downsampling via average pooling.

\noindent{\bf Simulated noisy measurement}\ \
To evaluate the advantages of $\ell_q$-norm loss for simulated impulse noise. We consider corrupting the measurement with salt-and-pepper noise. We consider two noise levels $r_{sp}=0.5,0.7$, which means that the random $50\%,70\%$ percentage of pixels are corrupted by setting the values to the minimum and maximum values with equality probability.

\noindent{\bf Quantitative results}\ \
Within the PnP solving framework, fixing the data fidelity term, we evaluate the effect of different regularizations on the image restorations. The compared regularizations include total variation (TV), DRUnet~\citep{zhang2021plug} pretrained for Gaussian noise removal, and our method with denoiser induced by diffusion  and flow model. We also include the original DPS method~\citep{chung2022diffusion} with $q=2$ as a baseline. All methods share a consistent IRLS-based solver and optimization setup. The stepsize is set as $s_i = 1/\norm{\bm{A}^T(\bm{W}_t(\bm{A}\bm{x}_i-\bm{y}))}_2$ and $T_{inter}=1$. For TV regularization, we run 60-step proximal minimization and the trade-off parameter is set to $s_i\eta_t$. For DRUnet, the noise level is set to $s_i\eta_t$ as well. For all experiments, we set $q=0.5$. We evaluate the results using PSNR$\uparrow$ and SSIM$\uparrow$ to quantify fidelity to the original images.

Table~\ref{tab:salt} presents quantitative results under $r_{\text{sp}} = 0.5$ salt-and-pepper noise. Across all datasets and tasks, our method consistently outperforms all competing approaches. Visual comparisons in Figures~\ref{fig:ffhq_} and~\ref{fig:afhq_} highlight the superior reconstruction quality, especially for challenging tasks like inpainting and super-resolution. DPS achieves visually detailed outputs but suffers from measurement inconsistency, often producing a gray layer over the image. These results underscore the effectiveness of the $\ell_q$-norm data fidelity in handling impulse noise and the superiority of generative priors (diffusion and flow) over hand-crafted or conventional denoisers. Additional qualitative examples are provided in the Appendix.

\begin{table}[!ht]
  \centering
  \small
    \fontsize{9}{12}\selectfont 
  \caption{Quantitative results of different methods for different tasks on the  three datasets with $0.5$ salt-and-pepper noise. \label{tab:salt}}
 \begin{tabular}{c@{\hspace{3pt}}c@{\hspace{1pt}}c@{\hspace{1pt}}c@{\hspace{1pt}}c@{\hspace{1pt}}c@{\hspace{1pt}}c@{\hspace{1pt}}c@{\hspace{1pt}}c@{\hspace{1pt}}c@{\hspace{1pt}}c@{\hspace{1pt}}c@{\hspace{1pt}}c@{\hspace{1pt}}c@{\hspace{1pt}}c@{\hspace{1pt}}c@{\hspace{1pt}}cc@{\hspace{1pt}}c@{\hspace{1pt}}c@{\hspace{0pt}}cc@{\hspace{1pt}}c@{\hspace{1pt}}c@{\hspace{0pt}}cc@{\hspace{1pt}}c@{\hspace{1pt}}c@{\hspace{0pt}}}
 \toprule
  & \multirow{2}{*}{Method} &   \multicolumn{2}{c}{\textbf{Denoise}}  && \multicolumn{2}{c}{\textbf{Deblur}} && \multicolumn{2}{c}{\textbf{Inpaint}}  && \multicolumn{2}{c}{\textbf{SR}} \\
        \cline{3-4} \cline{6-7} \cline{9-10} \cline{12-13}
   &  & PSNR  & SSIM && PSNR  & SSIM  && PSNR  & SSIM && PSNR  & SSIM \\
   \toprule
 \multirow{4}{*}{\STAB{\rotatebox[origin=c]{90}{FFHQ}}} & PnP-TV     & 25.22 & 0.763 &  & 20.39 & 0.633 &  & 22.74 & 0.686  &  & 16.82 & 0.552 \\
& DRUnet & 26.42 & 0.765  &  & 25.10  & 0.719 &  & 16.33 & 0.572 &  & 12.90  & 0.464 \\
& DPS        & 16.71 & 0.565 &  & 16.36 & 0.549 &  & 11.25 & 0.416 &  & 10.83 & 0.065 \\
& Ours    & 28.36 & 0.828  &  & 25.45 & 0.729 &  & 23.59 & 0.727 &  & 20.24 & 0.621\\
\midrule
\multirow{4}{*}{\STAB{\rotatebox[origin=c]{90}{ImageNet}}} & PnP-TV     & 22.84 & 0.683 &  & 18.20  & 0.491 &  & 21.70  & 0.591 &  & 14.13 & 0.436 \\
& DRUnet & 24.63 & 0.675  &  & 22.68 & 0.589 &  & 17.08 & 0.506 &  & 10.45 & 0.237  \\
& DPS        & 16.36 & 0.412 &  & 12.32 & 0.215 &  & 7.96  & 0.248 &  & 8.97  & 0.037 \\
& Ours       & 25.81 & 0.679 &  & 23.33 & 0.613 &  & 21.71 & 0.593 &  & 16.36 & 0.431 \\
   \midrule

 \multirow{3}{*}{\STAB{\rotatebox[origin=c]{90}{AFHQ}}} &  PnP-TV     & 24.59 & 0.729 &  & 20.16 & 0.560  &  & 23.21 & 0.643 &  & 16.98 & 0.475 \\
& DRUnet & 27.43 & 0.770  &  & 24.74 & 0.654 &  & 16.12 & 0.505 &  & 14.00    & 0.434 \\
& Ours       & 29.73 & 0.879 &  & 25.26 & 0.677 &  & 26.16 & 0.761 &  & 20.94 & 0.581\\
   \bottomrule
\end{tabular}
\end{table}

\begin{figure}[!ht]
   \centering 
   \begin{tabular}{c@{\hspace*{3pt}}c@{\hspace*{1pt}}c@{\hspace*{1pt}}c@{\hspace*{1pt}}c@{\hspace*{1pt}}c@{\hspace*{1pt}}c@{\hspace*{1pt}}c@{\hspace*{1pt}}c@{\hspace*{1pt}}c@{\hspace*{2pt}}c@{\hspace*{2pt}}c@{\hspace*{2pt}}c@{\hspace*{1pt}}}
\rotatebox[origin=c]{90}{Input} &\raisebox{-0.5\height}{\includegraphics[width = 0.105\textwidth]{./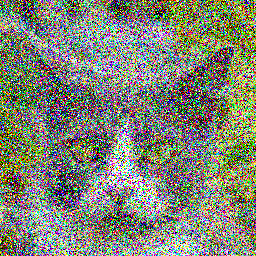}}      & \raisebox{-0.5\height}{\includegraphics[width = 0.105\textwidth]{./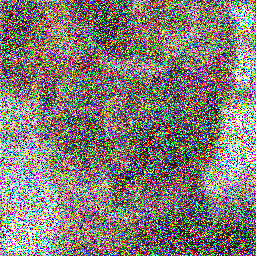}}      & \raisebox{-0.5\height}{\includegraphics[width = 0.105\textwidth]{./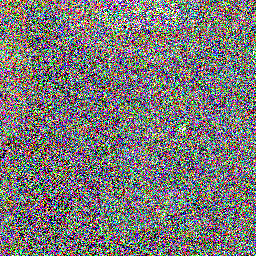}}      & \raisebox{-0.5\height}{\includegraphics[width = 0.105\textwidth]{./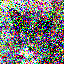}} \\[-2pt]
     \rotatebox[origin=c]{90}{PnP-TV} &\raisebox{-0.5\height}{\includegraphics[width = 0.105\textwidth]{./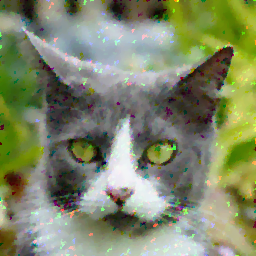}}         & \raisebox{-0.5\height}{\includegraphics[width = 0.105\textwidth]{./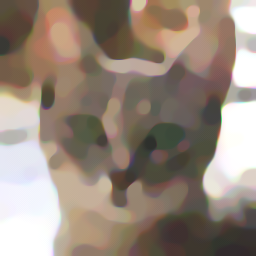}}         & \raisebox{-0.5\height}{\includegraphics[width = 0.105\textwidth]{./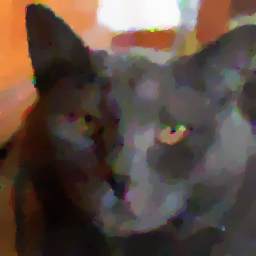}}         & \raisebox{-0.5\height}{\includegraphics[width = 0.105\textwidth]{./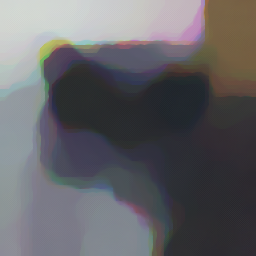}}    \\[-2pt]
     \rotatebox[origin=c]{90}{DRUnet} &\raisebox{-0.5\height}{\includegraphics[width = 0.105\textwidth]{./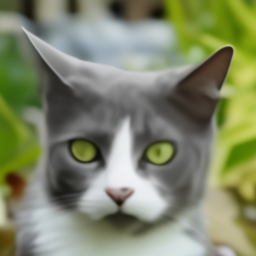}}     & \raisebox{-0.5\height}{\includegraphics[width = 0.105\textwidth]{./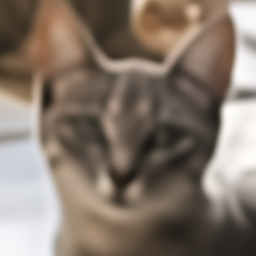}}     & \raisebox{-0.5\height}{\includegraphics[width = 0.105\textwidth]{./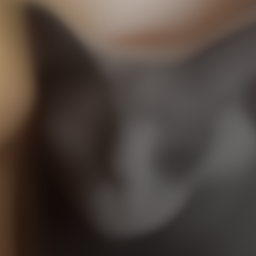}}     & \raisebox{-0.5\height}{\includegraphics[width = 0.105\textwidth]{./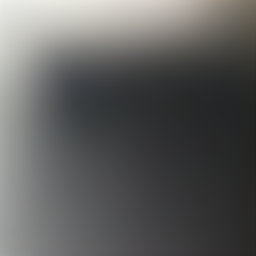}}\\[-2pt]
     \rotatebox[origin=c]{90}{Ours} &\raisebox{-0.5\height}{\includegraphics[width = 0.105\textwidth]{./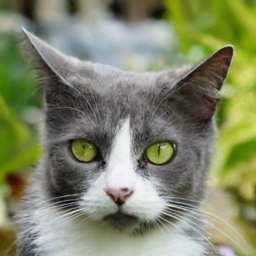}}       & \raisebox{-0.5\height}{\includegraphics[width = 0.105\textwidth]{./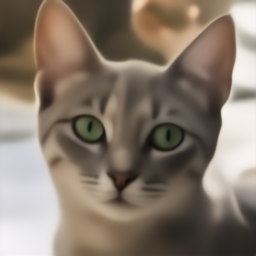}}       & \raisebox{-0.5\height}{\includegraphics[width = 0.105\textwidth]{./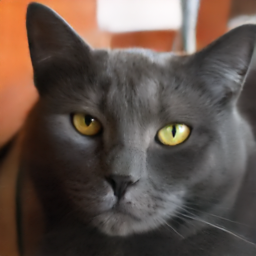}}       & \raisebox{-0.5\height}{\includegraphics[width = 0.105\textwidth]{./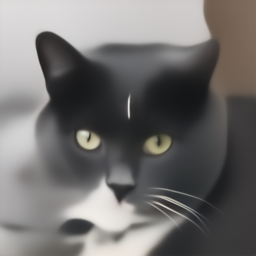}}  \\[-2pt]
     \rotatebox[origin=c]{90}{GT} &\raisebox{-0.5\height}{\includegraphics[width = 0.105\textwidth]{./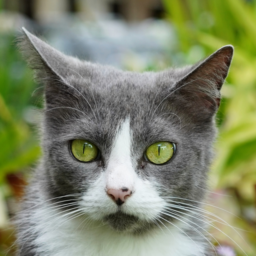}}         & \raisebox{-0.5\height}{\includegraphics[width = 0.105\textwidth]{./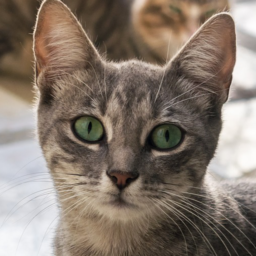}}         & \raisebox{-0.5\height}{\includegraphics[width = 0.105\textwidth]{./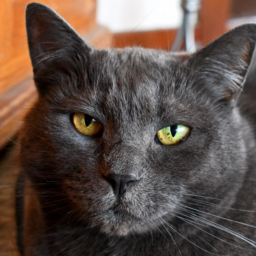}}         & \raisebox{-0.5\height}{\includegraphics[width = 0.105\textwidth]{./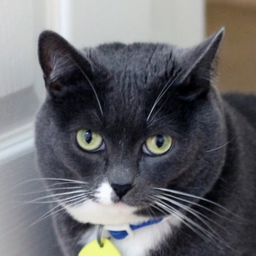}}\\
     &  \small{Denoising} & \small{Deblur} & \small{Inpainting} & \small{SR} \\
 \end{tabular}\vspace*{-2pt}
 \caption{\textbf{Visualization of different methods on four tasks with impulse noise for AFHQ-Cat.}\label{fig:afhq_} } 
\end{figure}

\noindent{\bf Ablation study on the choice of loss}\ \
We conduct an ablation study using AFHQ dataset to investigate the impact of the choice of $q$ in the $\ell_q$-norm data fidelity term on restoration performance. Our iteratively reweighed least-squares solver is applicable for any $q \in (0, 2]$. As shown in Introduction, for impulse noise removal, setting $q$ in interval $(0,1]$ is better. See Table~\ref{tab:time} for the final PSNR/SSIM results with different $q$ values. Smaller $q$ values (e.g., $q \in (0, 0.5]$) consistently yield better performance, and the performance degrades as $q$ increases beyond 0.5. See Figure~\ref{fig:abl} for the illustration of the effect of $q$ values.
\begin{table}[!ht]
  \centering 
  \small
    \fontsize{9}{12}\selectfont 
  \caption{Performance comparison of image denoising on AFHQ-Cat dataset using different $q$ values. \label{tab:time}} 
 \begin{tabular}{@{\hspace{2pt}}c@{\hspace{3pt}}c@{\hspace{3pt}}c@{\hspace{3pt}}c@{\hspace{3pt}}c@{\hspace{3pt}}c@{\hspace{3pt}}c@{\hspace{1pt}}c@{\hspace{1pt}}c@{\hspace{1pt}}c@{\hspace{1pt}}c@{\hspace{1pt}}c@{\hspace{1pt}}c@{\hspace{1pt}}c@{\hspace{1pt}}c@{\hspace{1pt}}c@{\hspace{1pt}}cc@{\hspace{1pt}}c@{\hspace{1pt}}c@{\hspace{0pt}}cc@{\hspace{1pt}}c@{\hspace{1pt}}c@{\hspace{0pt}}cc@{\hspace{1pt}}c@{\hspace{1pt}}c@{\hspace{0pt}}}
 \toprule

   $q$  & 0.2   & 0.3   & 0.4   & 0.5   & 0.6   & 0.7   & 0.8   & 0.9   & 1.0     \\
    \toprule
   PSNR & 29.62 & 29.69 & 29.72 & 29.73 & 29.66 & 29.49 & 29.16 & 28.59 & 27.76 \\
SSIM & 0.871 & 0.876 & 0.879 & 0.879 & 0.863 & 0.852 & 0.838 & 0.82  & 0.799\\
\bottomrule
\end{tabular}\vspace*{-10pt}
\end{table}

\begin{figure}[!ht]
   \centering 
   \begin{tabular}{c@{\hspace*{3pt}}c@{\hspace*{1pt}}c@{\hspace*{1pt}}c@{\hspace*{1pt}}c@{\hspace*{1pt}}c@{\hspace*{1pt}}c@{\hspace*{1pt}}c@{\hspace*{1pt}}c@{\hspace*{1pt}}c@{\hspace*{2pt}}c@{\hspace*{2pt}}c@{\hspace*{2pt}}c@{\hspace*{1pt}}}
      \includegraphics[width = 0.105\textwidth]{./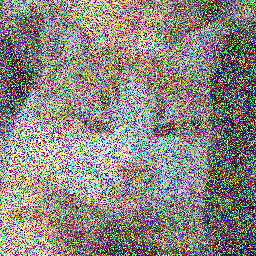} &
      \includegraphics[width = 0.105\textwidth]{./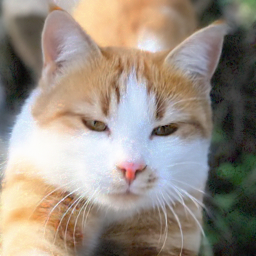} &
                                                                                                 \includegraphics[width = 0.105\textwidth]{./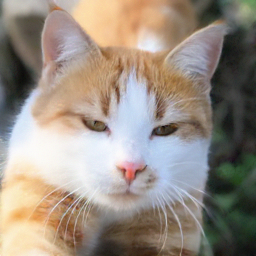}\\[-2pt]
     \small{Input } & \small{0.1} & \small{0.5}\\
      \includegraphics[width = 0.105\textwidth]{./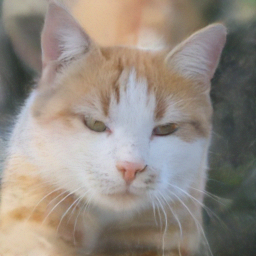} &
      \includegraphics[width = 0.105\textwidth]{./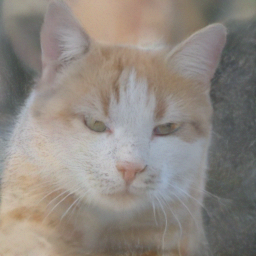} &
      \includegraphics[width = 0.105\textwidth]{./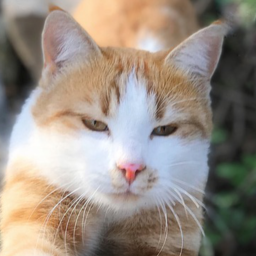} \\
     \small{1.5} & \small{2.0} & \small{GT}\\
 \end{tabular}\vspace*{-8pt}
 \caption{\textbf{Visualization of restorations from different $q$ values for image denoising on AFHQ-Cat.}\label{fig:abl} } 
\end{figure}

\section{Conclusion}

Within the general play-and-play (PnP) framework for image restoration, we propose using $\ell_q$ norm fidelity term to address general noise. Leveraging the strong diffusion prior as a regularization, we utilize the iteratively reweighted least-squares based method to efficiently solve  the diffusion prior regularized $\ell_q$ norm minimization problem. The core subproblem is tackled via a forward-backward splitting method, where the proximal step involving the diffusion prior is approximated through a denoising operation. This denoising is efficiently performed using the induced denoiser of the pretrained diffusion model. Experiments are conducted to evaluate our proposed method on three datasets with four imaging tasks, utilizing both diffusion and flow-based generative models as priors. Quantitative results and visualizations showcase the superior advantages of our method with the integration of plug-and-play diffusion prior and $\ell_q$ norm loss over other TV regularizer and deep denoiser. Our method also outperforms the original DPS with $q=2$, as it is tailored for Gaussian noise. 

\section{Acknowledgments}
JL was supported by the National Natural Science Foundation of China (Grant No. 12571472) and supported by the Open Project of Key Laboratory of Mathematics and Information Networks (Beijing University of Posts and Telecommunications), Ministry of Education, China, under Grant No. KF202401).


\newpage
\clearpage
\setcounter{page}{1}
\maketitlesupplementary

\appendix

\section{Proof for Lemma~\ref{lem:2} }
\label{sec:ao2}

\begin{proof}
  To avoid the possible overflow of $y=0$, we modify the inequality~\eqref{eq:ine} with the following form
  \begin{equation}
    |x|^q\leq \frac{q}{2}(y^2+\epsilon)^{\frac{q-2}{2}}x^2 + \frac{2-q}{2}(y^2+\epsilon)^{\frac{q}{2}}.
  \end{equation}
  Therefore, we have 
  \begin{equation*}
    \begin{aligned}
      &\norm{\bm{A}\bm{x}-\bm{y}}_q^q = \sum_{i=1}^m|(\bm{A}\bm{x}-\bm{y})_i|^q \\
      & \leq \sum_{i=1}^m\left(\frac{q}{2}((\bm{A}\bm{x}_k-\bm{y})_i^2+\epsilon_k)^{\frac{q-2}{2}}|(\bm{A}\bm{x}-\bm{y})_i|^2 \right .\\
        & \phantom{===}\left .+ \frac{2-q}{2}((\bm{A}\bm{x}_k-\bm{y})_i^2+\epsilon_k)^{\frac{q}{2}}\right) \\
      & \leq \frac{q}{2}\norm{\bm{W}_k\circ (\bm{A}\bm{x}-\bm{y})}_2^2 + \frac{2-q}{2}\norm{\sqrt{(\bm{A}\bm{x}_k-\bm{y})^2+\bm{\epsilon}_k}}_q^q,
    \end{aligned}
  \end{equation*}
  where the weight matrix $\bm{W}_k:=  ((\bm{A}\bm{x}_k-\bm{y})^2 +\bm{\epsilon}_k)^{\frac{q-2}{4}}$ and the multiplication follows Hadamard notation.
\end{proof}

\section{Image Restoration with Impulse Noise Level $r_{sp}=0.7$ }
\label{sec:ao2}
In this section, we present the results of image restoration under a more challenging setting with higher impulse noise, specifically with a salt-and-pepper noise level of $r_{sp}=0.7$. All hyperparameters and evaluation settings remain the same as in the main paper for the $r_{sp}=0.5$ case.

Table~\ref{tab:salt07} reports the quantitative performance on the FFHQ and ImageNet datasets. Our method consistently achieves the best performance across all tasks, further demonstrating the robustness and effectiveness of combining the plug-and-play diffusion prior with the $\ell_q$-norm data fidelity term for handling severe impulse noise.

Note that some cells in the table are left blank, indicating failure modes where the corresponding methods were unable to produce valid reconstructions. These failure cases highlight potential limitations of existing methods under high noise conditions and warrant further investigation to enhance their robustness.

\begin{table}[!htp]
  \centering
  \small
    \fontsize{9}{12}\selectfont 
  \caption{Quantitative results of different methods for different tasks on the  two datasets with $0.7$ salt-and-pepper noise. \label{tab:salt07}}
 \begin{tabular}{c@{\hspace{3pt}}c@{\hspace{1pt}}c@{\hspace{1pt}}c@{\hspace{1pt}}c@{\hspace{1pt}}c@{\hspace{1pt}}c@{\hspace{1pt}}c@{\hspace{1pt}}c@{\hspace{1pt}}c@{\hspace{1pt}}c@{\hspace{1pt}}c@{\hspace{1pt}}c@{\hspace{1pt}}c@{\hspace{1pt}}c@{\hspace{1pt}}c@{\hspace{1pt}}cc@{\hspace{1pt}}c@{\hspace{1pt}}c@{\hspace{0pt}}cc@{\hspace{1pt}}c@{\hspace{1pt}}c@{\hspace{0pt}}cc@{\hspace{1pt}}c@{\hspace{1pt}}c@{\hspace{0pt}}}
   \toprule
 \multirow{2}{*}{\STAB{\rotatebox[origin=c]{90}{Data}}} & \multirow{2}{*}{Method}&   \multicolumn{2}{c}{\textbf{Denoise}}  && \multicolumn{2}{c}{\textbf{Deblur}} && \multicolumn{2}{c}{\textbf{Inpaint}}  && \multicolumn{2}{c}{\textbf{SR}} \\
        \cline{3-4} \cline{6-7} \cline{9-10} \cline{12-13}
   &   & PSNR  & SSIM && PSNR  & SSIM  && PSNR  & SSIM && PSNR  & SSIM \\
   \toprule
 \multirow{4}{*}{\STAB{\rotatebox[origin=c]{90}{FFHQ}}} & PnP-TV      & 18.16 & 0.475  &  & 19.92 & 0.562  &  & 19.64 & 0.606  &  & 16.29 & 0.519  \\
& DRUnet & 18.97 & 0.5581 &  & 23.17 & 0.6756 &  & 12.26 & 0.4793 &  & 12.91 & 0.4637 \\
& DPS       & 13.92 & 0.434 &  & 13.85 & 0.486 &  & 10.26 & 0.375 &  & 9.58 & 0.038\\
& Ours     & 23.84 & 0.713  &  & 23.48 & 0.691  &  & 20.69 & 0.637  &  & 20.24 & 0.621 \\
\midrule
\multirow{4}{*}{\STAB{\rotatebox[origin=c]{90}{ImageNet}}} & PnP-TV     & 14.08 & 0.223  &  & 17.85 & 0.467 &  & 18.74 & 0.533 &  & 16.35 & 0.4405 \\
& DRUnet & 17.12 & 0.435 &  & 18.33 & 0.399  &  & 12.44 & 0.429 &  &       &        \\
& DPS         & 13.64 & 0.272  &  & 12.05 & 0.242 &  & 7.61  & 0.232 &  & 8.43  & 0.026  \\
   & Ours        & 21.65 & 0.6149   &  & 21.88 & 0.577 &  & 20.39 & 0.534 &  &       &       \\
   \bottomrule
\end{tabular}
\end{table}

\begin{figure*}[!ht]
    \centering 
    \begin{tabular}{c@{\hspace*{3pt}}c@{\hspace*{1pt}}c@{\hspace*{1pt}}c@{\hspace*{1pt}}c@{\hspace*{1pt}}c@{\hspace*{1pt}}c@{\hspace*{1pt}}c@{\hspace*{1pt}}c@{\hspace*{1pt}}c@{\hspace*{2pt}}c@{\hspace*{2pt}}c@{\hspace*{2pt}}c@{\hspace*{1pt}}}
\rotatebox[origin=c]{90}{Denoising} &\raisebox{-0.5\height}{\includegraphics[width = 0.15\textwidth]{./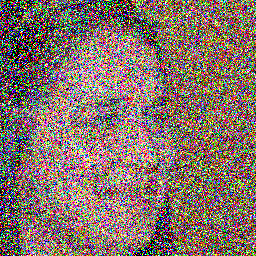}}
  & \raisebox{-0.5\height}{\includegraphics[width = 0.15\textwidth]{./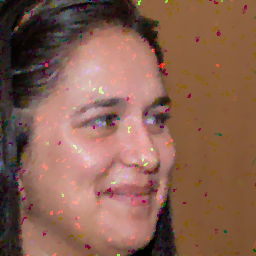}}
  & \raisebox{-0.5\height}{\includegraphics[width = 0.15\textwidth]{./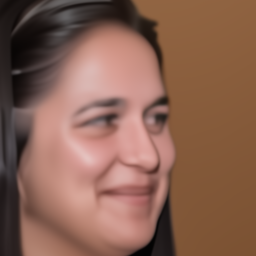}}
  & \raisebox{-0.5\height}{\includegraphics[width = 0.15\textwidth]{./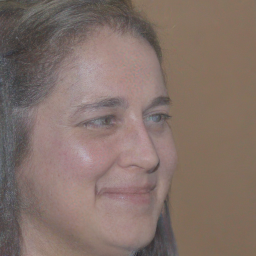}}
  & \raisebox{-0.5\height}{\includegraphics[width = 0.15\textwidth]{./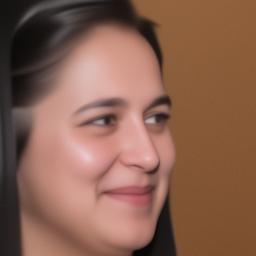}}
  & \raisebox{-0.5\height}{\includegraphics[width = 0.15\textwidth]{./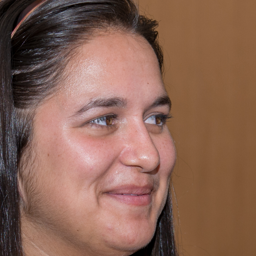}}\\[-2pt]
  \rotatebox[origin=c]{90}{Denoising} &\raisebox{-0.5\height}{\includegraphics[width = 0.15\textwidth]{./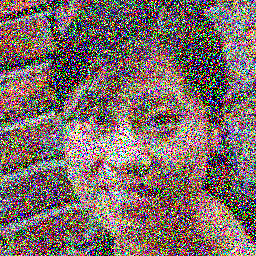}}
  & \raisebox{-0.5\height}{\includegraphics[width = 0.15\textwidth]{./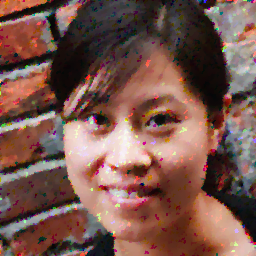}}
  & \raisebox{-0.5\height}{\includegraphics[width = 0.15\textwidth]{./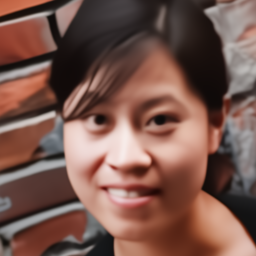}}
  & \raisebox{-0.5\height}{\includegraphics[width = 0.15\textwidth]{./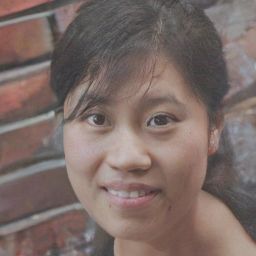}}
  & \raisebox{-0.5\height}{\includegraphics[width = 0.15\textwidth]{./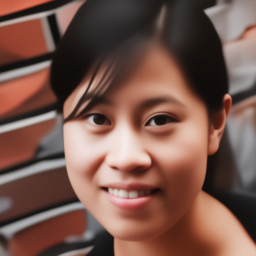}}
      & \raisebox{-0.5\height}{\includegraphics[width = 0.15\textwidth]{./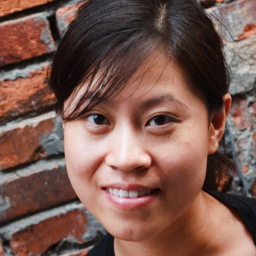}}\\[-2pt]
   \rotatebox[origin=c]{90}{Deblur} &\raisebox{-0.5\height}{\includegraphics[width = 0.15\textwidth]{./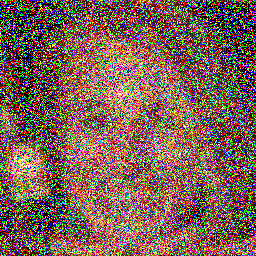}}
& \raisebox{-0.5\height}{\includegraphics[width = 0.15\textwidth]{./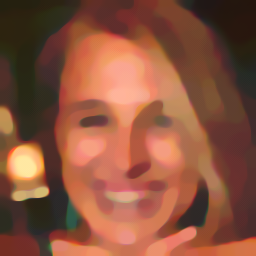}}
& \raisebox{-0.5\height}{\includegraphics[width = 0.15\textwidth]{./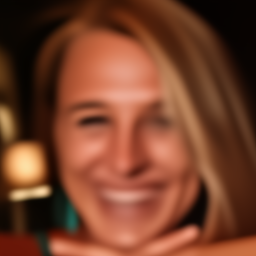}}
& \raisebox{-0.5\height}{\includegraphics[width = 0.15\textwidth]{./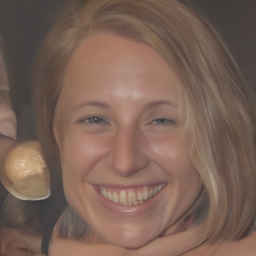}}
& \raisebox{-0.5\height}{\includegraphics[width = 0.15\textwidth]{./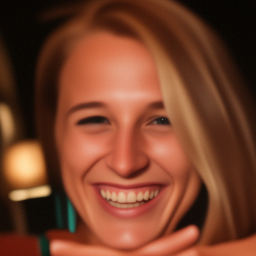}}
& \raisebox{-0.5\height}{\includegraphics[width = 0.15\textwidth]{./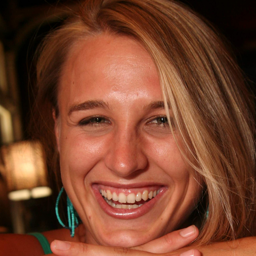}}\\[-2pt]
\rotatebox[origin=c]{90}{Deblur} &\raisebox{-0.5\height}{\includegraphics[width = 0.15\textwidth]{./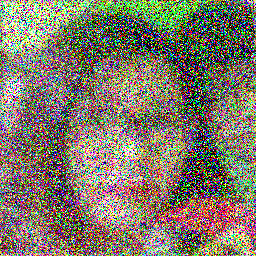}}
& \raisebox{-0.5\height}{\includegraphics[width = 0.15\textwidth]{./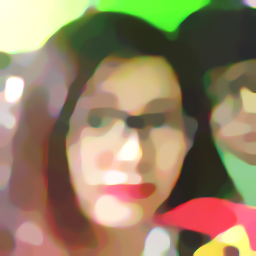}}
& \raisebox{-0.5\height}{\includegraphics[width = 0.15\textwidth]{./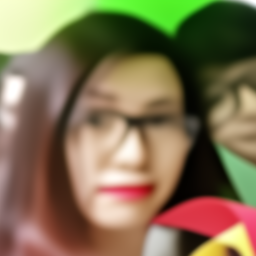}}
& \raisebox{-0.5\height}{\includegraphics[width = 0.15\textwidth]{./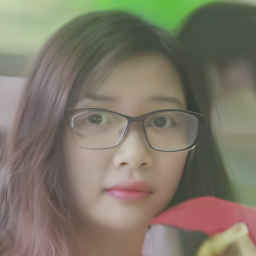}}
& \raisebox{-0.5\height}{\includegraphics[width = 0.15\textwidth]{./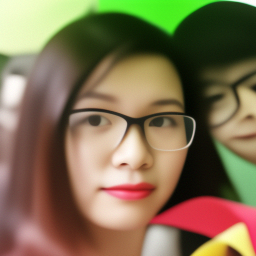}}
& \raisebox{-0.5\height}{\includegraphics[width = 0.15\textwidth]{./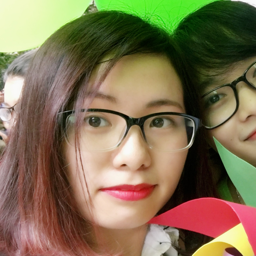}}\\[-2pt]
\rotatebox[origin=c]{90}{Inpaint} &\raisebox{-0.5\height}{\includegraphics[width = 0.15\textwidth]{./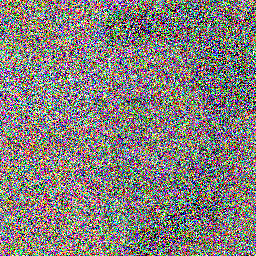}}
& \raisebox{-0.5\height}{\includegraphics[width = 0.15\textwidth]{./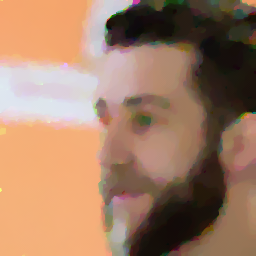}}
& \raisebox{-0.5\height}{\includegraphics[width = 0.15\textwidth]{./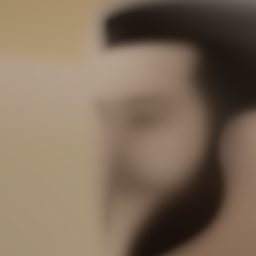}}
& \raisebox{-0.5\height}{\includegraphics[width = 0.15\textwidth]{./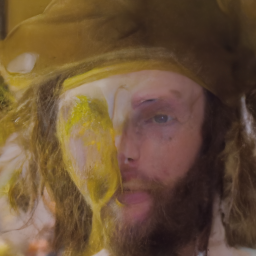}}
& \raisebox{-0.5\height}{\includegraphics[width = 0.15\textwidth]{./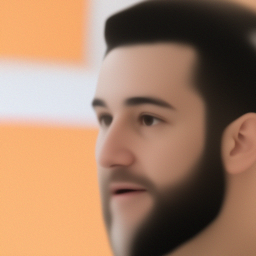}}
& \raisebox{-0.5\height}{\includegraphics[width = 0.15\textwidth]{./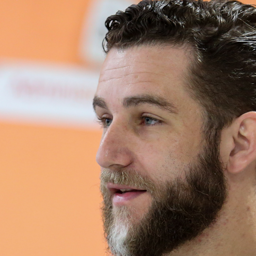}}\\[-2pt]
\rotatebox[origin=c]{90}{Inpaint} &\raisebox{-0.5\height}{\includegraphics[width = 0.15\textwidth]{./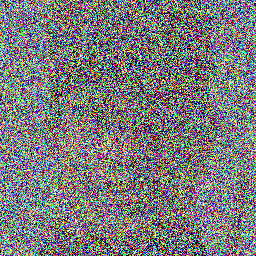}}
& \raisebox{-0.5\height}{\includegraphics[width = 0.15\textwidth]{./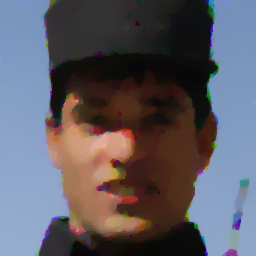}}
& \raisebox{-0.5\height}{\includegraphics[width = 0.15\textwidth]{./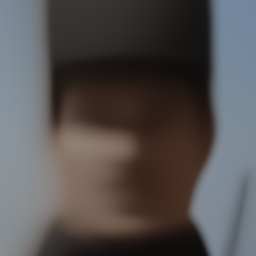}}
& \raisebox{-0.5\height}{\includegraphics[width = 0.15\textwidth]{./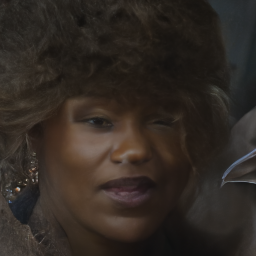}}
& \raisebox{-0.5\height}{\includegraphics[width = 0.15\textwidth]{./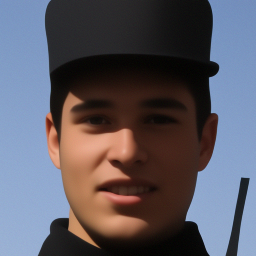}}
      & \raisebox{-0.5\height}{\includegraphics[width = 0.15\textwidth]{./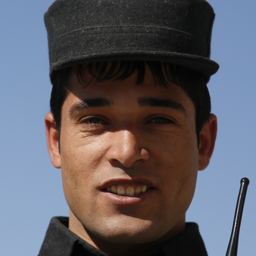}}\\[-2pt]
\rotatebox[origin=c]{90}{SR} &\raisebox{-0.5\height}{\includegraphics[width = 0.15\textwidth]{./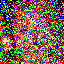}}
& \raisebox{-0.5\height}{\includegraphics[width = 0.15\textwidth]{./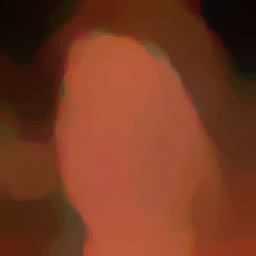}}
& \raisebox{-0.5\height}{\includegraphics[width = 0.15\textwidth]{./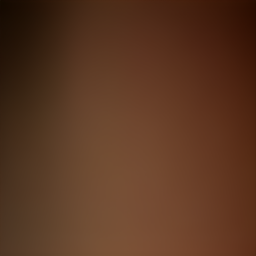}}
& \raisebox{-0.5\height}{\includegraphics[width = 0.15\textwidth]{./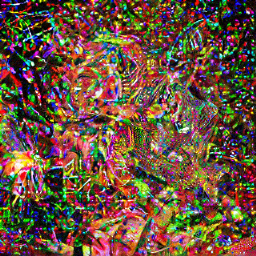}}
& \raisebox{-0.5\height}{\includegraphics[width = 0.15\textwidth]{./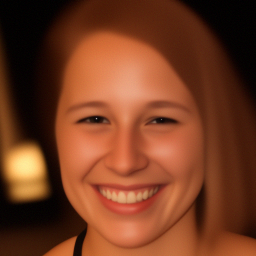}}
& \raisebox{-0.5\height}{\includegraphics[width = 0.15\textwidth]{./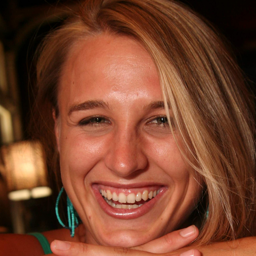}}\\[-2pt]
\rotatebox[origin=c]{90}{SR} &\raisebox{-0.5\height}{\includegraphics[width = 0.15\textwidth]{./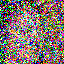}}
& \raisebox{-0.5\height}{\includegraphics[width = 0.15\textwidth]{./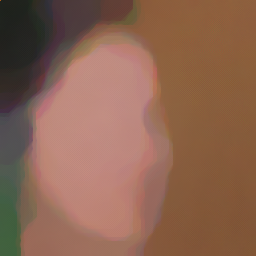}}
& \raisebox{-0.5\height}{\includegraphics[width = 0.15\textwidth]{./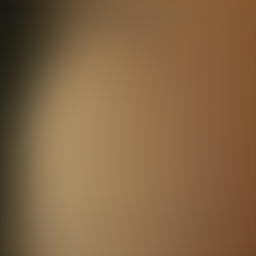}}
& \raisebox{-0.5\height}{\includegraphics[width = 0.15\textwidth]{./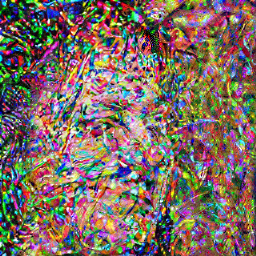}}
& \raisebox{-0.5\height}{\includegraphics[width = 0.15\textwidth]{./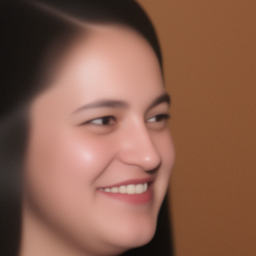}}
& \raisebox{-0.5\height}{\includegraphics[width = 0.15\textwidth]{./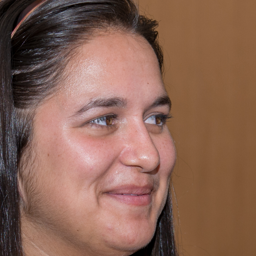}}\\
      &  \small{Input} & \small{PnP-TV} & \small{DRUnet} & \small{DPS} & \small{Ours} & \small{GT}\\
  \end{tabular}\vspace*{-8pt}
  \caption{\textbf{Visualization of different methods on four tasks with impulse noise for FFHQ.}\label{fig:ffhq_a} } 
  \end{figure*} 

\section{More Visualization Comparisons on FFHQ and AFHQ}\label{sec:a1}

In this section, we provide additional visual comparisons to further demonstrate the effectiveness of our proposed method. Figures~\ref{fig:ffhq_a} and~\ref{fig:afhq_a} show restoration results on the FFHQ and AFHQ datasets, respectively, under impulse noise level $r_{sp}=0.5$.

Our method consistently produces high-quality reconstructions, significantly outperforming alternative plug-and-play (PnP) priors, including traditional total variation regularization and PnP methods with pretrained denoisers. Notably, our approach achieves better detail preservation and fewer artifacts, particularly in challenging regions affected by impulse noise, illustrating the superiority of integrating a diffusion prior with $\ell_q$-norm fidelity for robust image restoration.

 \begin{figure*}[!htbp]
    \centering 
    \begin{tabular}{c@{\hspace*{3pt}}c@{\hspace*{1pt}}c@{\hspace*{1pt}}c@{\hspace*{1pt}}c@{\hspace*{1pt}}c@{\hspace*{1pt}}c@{\hspace*{1pt}}c@{\hspace*{1pt}}c@{\hspace*{1pt}}c@{\hspace*{2pt}}c@{\hspace*{2pt}}c@{\hspace*{2pt}}c@{\hspace*{1pt}}}
 \rotatebox[origin=c]{90}{Denoising} &\raisebox{-0.5\height}{\includegraphics[width = 0.15\textwidth]{./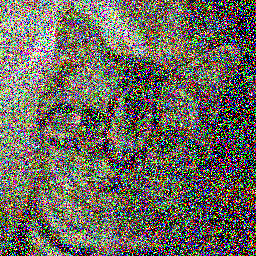}}
  & \raisebox{-0.5\height}{\includegraphics[width = 0.15\textwidth]{./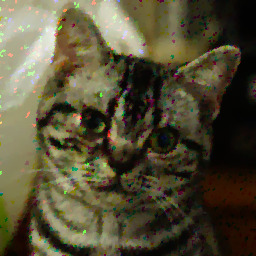}}
  & \raisebox{-0.5\height}{\includegraphics[width = 0.15\textwidth]{./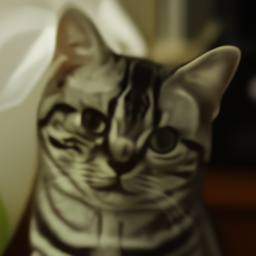}}
  & \raisebox{-0.5\height}{\includegraphics[width = 0.15\textwidth]{./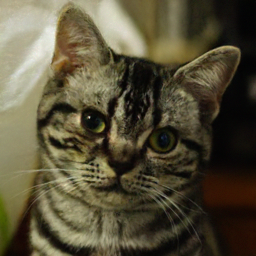}}
  & \raisebox{-0.5\height}{\includegraphics[width = 0.15\textwidth]{./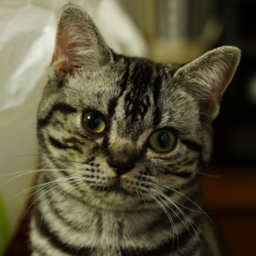}}\\[-2pt]
 
  \rotatebox[origin=c]{90}{Denoising} &\raisebox{-0.5\height}{\includegraphics[width = 0.15\textwidth]{./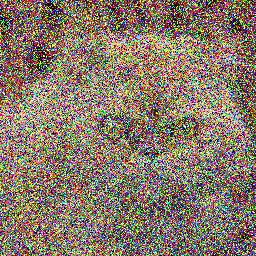}}
  & \raisebox{-0.5\height}{\includegraphics[width = 0.15\textwidth]{./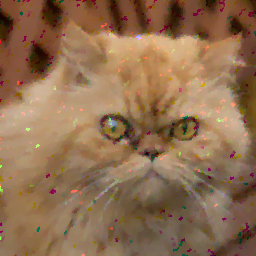}}
  & \raisebox{-0.5\height}{\includegraphics[width = 0.15\textwidth]{./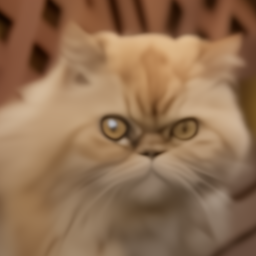}}
  & \raisebox{-0.5\height}{\includegraphics[width = 0.15\textwidth]{./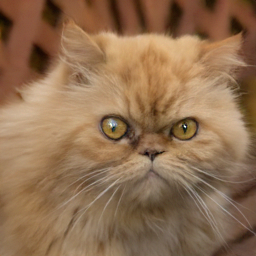}}
      & \raisebox{-0.5\height}{\includegraphics[width = 0.15\textwidth]{./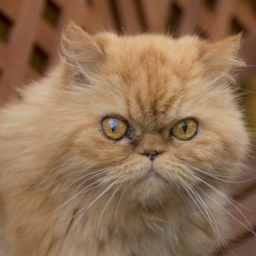}}\\[-2pt]

      \rotatebox[origin=c]{90}{Deblur} &\raisebox{-0.5\height}{\includegraphics[width = 0.15\textwidth]{./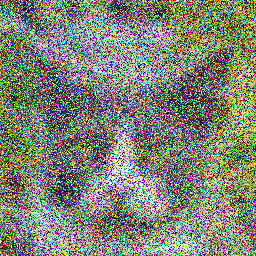}}
  & \raisebox{-0.5\height}{\includegraphics[width = 0.15\textwidth]{./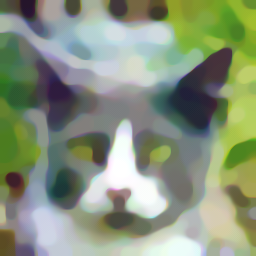}}
  & \raisebox{-0.5\height}{\includegraphics[width = 0.15\textwidth]{./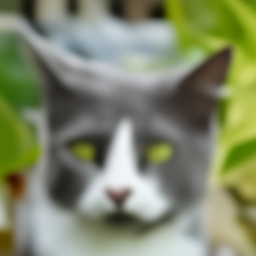}}
  & \raisebox{-0.5\height}{\includegraphics[width = 0.15\textwidth]{./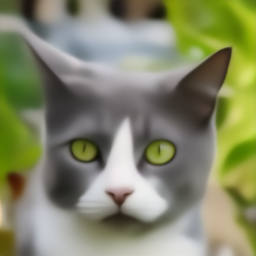}}
  & \raisebox{-0.5\height}{\includegraphics[width = 0.15\textwidth]{./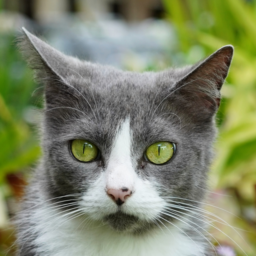}}\\[-2pt]
  \rotatebox[origin=c]{90}{Deblur} &\raisebox{-0.5\height}{\includegraphics[width = 0.15\textwidth]{./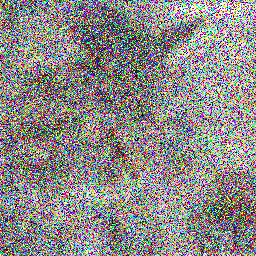}}
  & \raisebox{-0.5\height}{\includegraphics[width = 0.15\textwidth]{./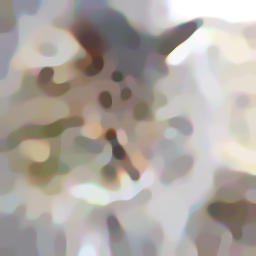}}
  & \raisebox{-0.5\height}{\includegraphics[width = 0.15\textwidth]{./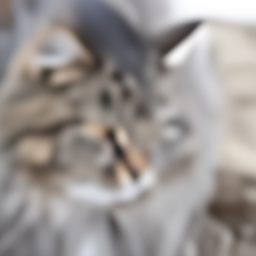}}
  & \raisebox{-0.5\height}{\includegraphics[width = 0.15\textwidth]{./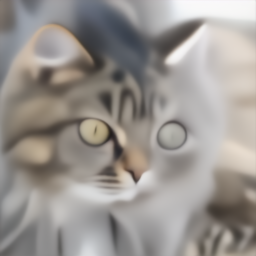}}
      & \raisebox{-0.5\height}{\includegraphics[width = 0.15\textwidth]{./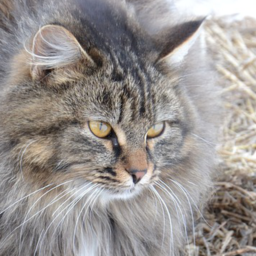}}\\[-2pt]
      \rotatebox[origin=c]{90}{Inpaint} &\raisebox{-0.5\height}{\includegraphics[width = 0.15\textwidth]{./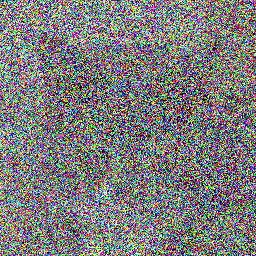}}
  & \raisebox{-0.5\height}{\includegraphics[width = 0.15\textwidth]{./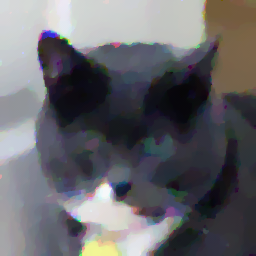}}
  & \raisebox{-0.5\height}{\includegraphics[width = 0.15\textwidth]{./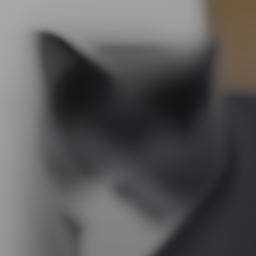}}
  & \raisebox{-0.5\height}{\includegraphics[width = 0.15\textwidth]{./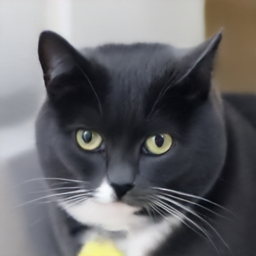}}
  & \raisebox{-0.5\height}{\includegraphics[width = 0.15\textwidth]{./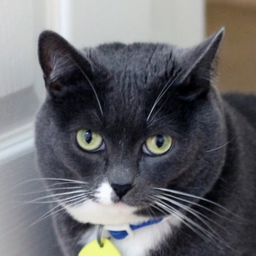}}\\[-2pt]
  \rotatebox[origin=c]{90}{Inpaint} &\raisebox{-0.5\height}{\includegraphics[width = 0.15\textwidth]{./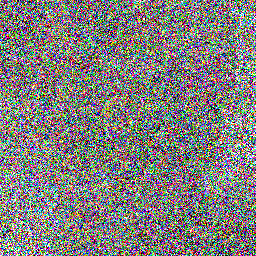}}
  & \raisebox{-0.5\height}{\includegraphics[width = 0.15\textwidth]{./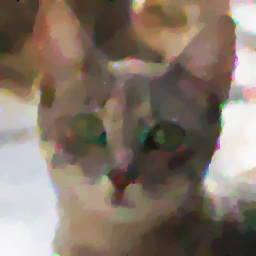}}
  & \raisebox{-0.5\height}{\includegraphics[width = 0.15\textwidth]{./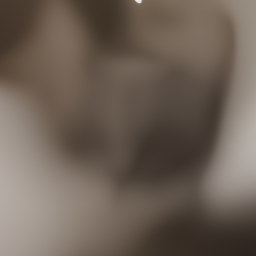}}
  & \raisebox{-0.5\height}{\includegraphics[width = 0.15\textwidth]{./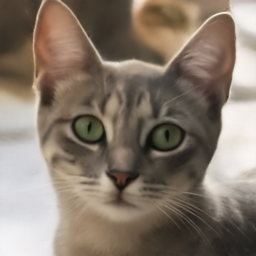}}
      & \raisebox{-0.5\height}{\includegraphics[width = 0.15\textwidth]{./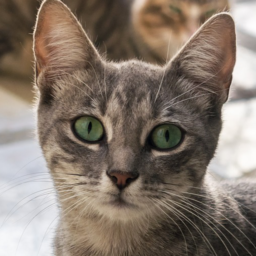}}\\[-2pt]
        \rotatebox[origin=c]{90}{SR} &\raisebox{-0.5\height}{\includegraphics[width = 0.15\textwidth]{./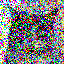}}
  & \raisebox{-0.5\height}{\includegraphics[width = 0.15\textwidth]{./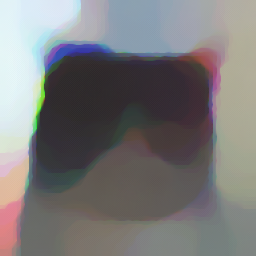}}
  & \raisebox{-0.5\height}{\includegraphics[width = 0.15\textwidth]{./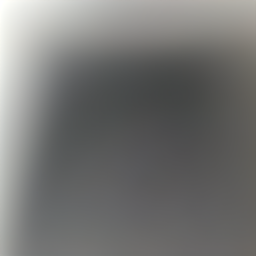}}
  & \raisebox{-0.5\height}{\includegraphics[width = 0.15\textwidth]{./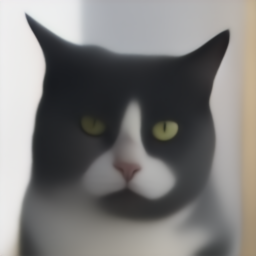}}
  & \raisebox{-0.5\height}{\includegraphics[width = 0.15\textwidth]{./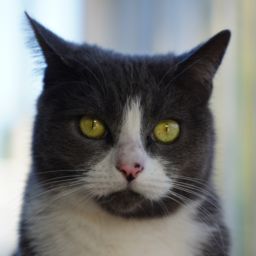}}\\[-2pt]
  \rotatebox[origin=c]{90}{SR} &\raisebox{-0.5\height}{\includegraphics[width = 0.15\textwidth]{./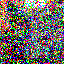}}
  & \raisebox{-0.5\height}{\includegraphics[width = 0.15\textwidth]{./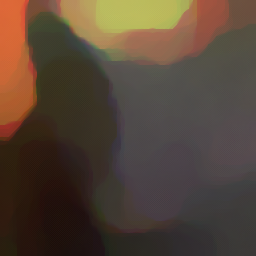}}
  & \raisebox{-0.5\height}{\includegraphics[width = 0.15\textwidth]{./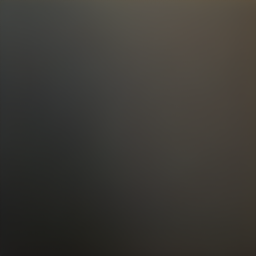}}
  & \raisebox{-0.5\height}{\includegraphics[width = 0.15\textwidth]{./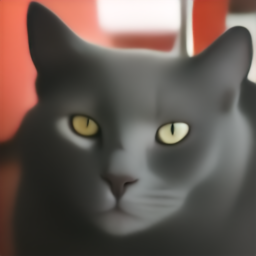}}
  & \raisebox{-0.5\height}{\includegraphics[width = 0.15\textwidth]{./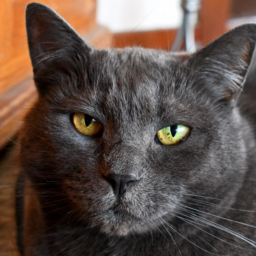}}\\
      &  \small{Input} & \small{PnP-TV} & \small{DRUnet} & \small{Ours} & \small{GT}\\
  \end{tabular}\vspace*{-8pt}
  \caption{\textbf{Visualization of different methods on four tasks with impulse noise for AFHQ-Cat.}\label{fig:afhq_a} } 
  \end{figure*} 
\end{document}